\documentclass[11pt, letterpaper, copyright, onecolumn]{preprint}

\usepackage[all]{hypcap}
\usepackage{xspace}
\usepackage[capitalize,noabbrev]{cleveref}
\usepackage{subcaption}
\usepackage{wrapfig}
\usepackage{lipsum}
\usepackage{tabularx}
\usepackage{multirow}
\usepackage{array}
\newcolumntype{Y}{>{\centering\arraybackslash}X}
\usepackage{makecell}
\usepackage[export]{adjustbox}
\makeatletter
\def\mathcolor#1#{\@mathcolor{#1}}
\def\@mathcolor#1#2#3{%
  \protect\leavevmode
  \begingroup
    \color#1{#2}#3%
  \endgroup
}
\makeatother
\usepackage[toc,page]{appendix}

\usepackage{listings}

\usepackage[most]{tcolorbox}
\usepackage{enumitem}

\let\oldtexttt\texttt
\renewcommand{\texttt}[1]{\oldtexttt{\small#1}}

\newtheorem{theorem}{Theorem}[section]
\newtheorem{proposition}[theorem]{Proposition}
\newtheorem{lemma}[theorem]{Lemma}

\newtheorem{definition}[theorem]{Definition}

\title{Deep Reinforcement Learning and\\The Tale of Two Temporal Difference Errors}

\runningtitle{Deep Reinforcement Learning and The Tale of Two Temporal Difference Errors}

\author[1]{Juan Sebastian Rojas}
\author[1]{Chi-Guhn Lee}

\affil[1]{University of Toronto, Canada} 

\correspondingauthor{Juan Sebastian Rojas (\href{mailto:juan.rojas@mail.utoronto.ca}{juan.rojas@mail.utoronto.ca}), Chi-Guhn Lee (\href{mailto:cglee@mie.utoronto.ca}{cglee@mie.utoronto.ca})}

\begin{abstract}
The temporal difference (TD) error was first formalized in \citet{Sutton1988-vs}, where it was first characterized as the difference between temporally successive predictions, and later, in that same work, formulated as the difference between a bootstrapped target and a prediction. Since then, these two interpretations of the TD error have been used interchangeably in the literature, with the latter eventually being adopted as the standard critic loss in deep reinforcement learning (RL) architectures. In this work, we show that these two interpretations of the TD error are not always equivalent. In particular, we show that increasingly nonlinear deep RL architectures can cause these interpretations of the TD error to yield increasingly different numerical values. Then, building on this insight, we show how choosing one interpretation of the TD error over the other can affect the performance of deep RL algorithms that utilize the TD error to compute other quantities, such as with deep differential (i.e., average-reward) RL methods. All in all, our results show that the default interpretation of the TD error as the difference between a bootstrapped target and a prediction does not always hold in deep RL settings.
\end{abstract}

\begin{document}

\maketitle

\section{Introduction}
\label{introduction}
One of the defining aspects of reinforcement learning (RL) is that of the temporal difference (TD) error. The concept of the TD error was first formalized in \citet{Sutton1988-vs}, where it was first characterized as the difference between temporally successive predictions, and later, in that same work, formulated as the difference between a bootstrapped target and a prediction. Since then, these two seemingly-equivalent interpretations of the TD error have been used interchangeably in the literature, where the latter interpretation is what often comes to mind when one thinks of the TD error.

When the first modern deep RL methods were proposed (e.g. \citet{Mnih2015-un}), they followed prior work (e.g. \citet{Tesauro1992-ul}) in adapting the interpretation of the TD error as the difference between a bootstrapped target and a prediction as the critic (i.e., value network) loss. This TD-based loss has become synonymous with the TD error in deep RL settings, and has been adopted in modern value-based (e.g. \citet{Mnih2015-un}) and actor-critic (e.g. \citet{Haarnoja2018-xd}) deep RL architectures. Indeed, if one were asked to compute the TD error at a given time step, \(\delta_t\), for a deep RL method, most would do so as the \(B\)-sized batch average difference between bootstrapped targets and estimates. For example, in a DQN-style algorithm (we will formalize our notation in Section \ref{prelim}):
\begin{equation}
\label{eq_tde_0}
    \delta_t \approx \frac{1}{B}\sum_{j=1}^{B}\left[R_{j,t+1} + \gamma \max_{a_*}Q(S_{j, t+1},a_*) - Q(S_{j,t}, A_{j,t})\right].
\end{equation}

But how reliable is this estimate of the TD error? And how does it compare to the original interpretation of the TD error as the difference between temporally successive predictions? In other words, is the output of Equation \eqref{eq_tde_0} always an accurate and appropriate representation of the TD error?

In this work, we seek to answer these questions. In particular, we show that increasingly nonlinear deep RL architectures can cause the two interpretations of the TD error to yield increasingly different numerical values. While we are hardly the first to notice a discrepancy between the two interpretations of the TD error in function approximation settings (e.g. see Exercise 9.6 in \citet{Sutton2018-eh}), to the best of our knowledge, this work is the first to provide a formal exploration and characterization of the differences between the two interpretations of the TD error.

Importantly, we are able to leverage insights from this exploration to show how choosing one interpretation of the TD error over the other can affect the performance of deep RL algorithms that utilize the TD error to compute other quantities, such as with deep differential (i.e., average-reward) RL methods. All in all, our results show that the default interpretation of the TD error as the difference between a bootstrapped target and a prediction does not always hold in deep RL settings.

\section{Preliminaries}
\label{prelim}
The agent-environment interaction that occurs in reinforcement learning is often modeled as a Markov decision process (MDP) \citep{Puterman1994-dq}. More formally, a finite discounted MDP is the tuple \(\mathcal{M} \doteq \langle \mathcal{S}, \mathcal{A}, \mathcal{R}, p , \gamma\rangle\), where \(\mathcal{S}\) is a finite set of states, \(\mathcal{A}\) is a finite set of actions, \(\mathcal{R} \subset \mathbb{R}\) is a finite set of rewards, \(p: \mathcal{S}\, \times\, \mathcal{A}\, \times\, \mathcal{R}\, \times\,  \mathcal{S} \rightarrow{} [0, 1]\) is a probabilistic transition function that describes the dynamics of the environment, and \(\gamma \in [0,1]\) is the discount factor. At each discrete time step, \(t = 0, 1, 2, \ldots\), an agent chooses an action, \(A_t \in \mathcal{A}\), based on its current state, \(S_t \in \mathcal{S}\), and receives a reward, \(R_{t+1} \in \mathcal{R}\), while transitioning to a (potentially) new state, \(S_{t+1}\), such that \(p(s', r \mid s, a) = \mathbb{P}(S_{t+1} = s', R_{t+1} = r \mid S_t = s, A_t = a)\). In a discounted MDP, an agent aims to find a policy, \(\pi: \mathcal{S} \rightarrow{} \mathcal{A}\), that optimizes the expected discounted sum of rewards. The performance of a policy can be characterized via a \emph{value function}, such as the state-action value function, \( Q_\pi \):
\begin{equation}
\label{eq_disc_value}
Q_{\pi}(s, a) \doteq \mathbb{E}_{\pi}\!\left[ \sum_{t=0}^{T} \gamma^{t} R_{t+1} \mid S_0 = s, A_0 = a, A_{1:T} \sim \pi \right],
\end{equation}
where \(T\) is the length of the time horizon of the MDP. In this work, we limit our discussion to \emph{stationary Markov} policies, which are time-independent policies that satisfy the Markov property.

In the context of RL, the state-action value function can be optimized via Q-learning \citep{Watkins1992-nq}. A tabular version of the algorithm is shown below:
\begin{subequations}
\label{eq_q_learning_tabular}
\begin{align}
\label{eq_q_learning_tabular_1}
& \delta_{t} = R_{t+1} + \gamma \max_{a_* \in \mathcal{A}}Q_{t}(S_{t+1}, a_*) - Q_{t}(S_t, A_t)\\
\label{eq_q_learning_tabular_2}
& Q_{t+1}(S_t, A_t) = Q_{t}(S_t, A_t) + \alpha_{t}\delta_{t}\\
\label{eq_q_learning_tabular_3}
& Q_{t+1}(s, a) = Q_{t}(s, a), \quad \forall (s,a) \neq (S_t, A_t) \, ,
\end{align}
\end{subequations}
where \(Q_t: \mathcal{S} \times \mathcal{A} \rightarrow \mathbb{R}\) is a table of state-action value function estimates, \(\delta_t\) is the TD error, and \(\alpha_t\) is the value function step size at time \(t\). Given some assumptions, Q-learning has been shown to converge to the optimal state-action value function in tabular settings \citep{Watkins1992-nq}.

In deep RL settings, the value function is approximated with a neural network, \( Q_{\boldsymbol{w}}(s,a) \), where \( \boldsymbol{w} \) denotes the parameters of the network. The agent learns the parameters of this \emph{critic} network by minimizing the \emph{mean square value loss} via gradient updates as follows \citep{Mnih2015-un}:
\begin{subequations}
\begin{align}
\label{eq_dqn_1}
\mathcal{L}_{\text{critic}}(\boldsymbol{w}_t)
&\doteq \frac{1}{B} \sum_{j=1}^{B}
\frac{1}{2}
\left(
R_{j, t+1}
+ \gamma \max_{a_* \in \mathcal{A}} Q_{\boldsymbol{w}_t}(S_{j, t+1},a_*)
- Q_{\boldsymbol{w}_t}(S_{j,t},A_{j,t})
\right)^2
\\
\label{eq_dqn_2}
\boldsymbol{w}_{t+1}
&= \boldsymbol{w}_{t}
- \alpha_t \nabla_{\boldsymbol{w}_t} \mathcal{L}_{\text{critic}} \, ,
\end{align}
\end{subequations}
where \(\{(S_{j,t}, A_{j,t}, R_{j,t+1}, S_{j,t+1})\}_{j=1}^{B}\) denotes a \(B\)-sized batch of agent-environment interactions sampled from a replay buffer, and \(\nabla_{\boldsymbol{w}_t} \mathcal{L}_{\text{critic}}\) denotes the gradient of the loss \eqref{eq_dqn_1} with respect to the network parameters. This kind of value function parameterization has been utilized in many value-based (e.g. \citet{Mnih2015-un}) and actor-critic (e.g. \citet{Haarnoja2018-xd}) deep RL algorithms, and has been shown to yield strong empirical performance in challenging learning environments.

\section{The Tale of Two Temporal Difference Errors}
\label{two_td_errors}
In this section, we present our primary contribution: a formal exploration and characterization of the differences between the two interpretations of the TD error. We first perform this characterization from an analytical perspective, followed by a characterization from an empirical perspective. Importantly, we note that we are hardly the first to notice a discrepancy between the two interpretations of the TD error in function approximation settings. For example, such a discrepancy is hinted at in Exercise 9.6 of \citet{Sutton2018-eh}. However, to the best of our knowledge, this work represents the first formal characterization of the differences between the two interpretations of the TD error. In particular, we will show that increasingly nonlinear deep RL architectures can cause the two interpretations of the TD error to yield increasingly different numerical values.

We begin our characterization by formalizing the two interpretations of the TD error. We begin with the common interpretation of the TD error as the difference between a bootstrapped target and a prediction. We will refer to this interpretation as the \emph{explicit} TD error, \(\delta^e\). For example, in the tabular Q-learning algorithm \eqref{eq_q_learning_tabular}, the explicit TD error at time \(t\) is defined as follows:
\begin{equation}
\label{eq_tde_1}
\delta^{e}_{t} = R_{t+1} + \gamma \max_{a_*}Q_{t}(S_{t+1}, a_*) - Q_{t}(S_t, A_t),
\end{equation}
where, in this case (i.e., for Q-learning), \(R_{t+1} + \gamma \max_{a_*}Q_{t}(S_{t+1}, a_*)\) denotes the bootstrapped target, and \(Q_{t}(S_t, A_t)\) denotes the prediction at time \(t\). Analogous explicit TD errors can be derived for other tabular RL algorithms using different targets and estimates. For simplicity, we will use Q-learning as a concrete example throughout this work, however, we emphasize that our definition of the explicit TD error encompasses all TD errors defined based on the difference between a bootstrapped target and a prediction. This is formalized as Definition \ref{defn_tde} below:
\vspace{4pt}
\begin{definition}[Explicit TD Error]
\label{defn_tde}
An explicit TD error, \(\delta^e\), is any TD error defined based on the difference between a bootstrapped target and a prediction.
\end{definition}

We now proceed with the interpretation of the TD error as the difference between temporally successive predictions. We will refer to this interpretation as the \emph{implicit} TD error, \(\delta^i\). For example, in the tabular Q-learning algorithm \eqref{eq_q_learning_tabular}, the implicit TD error at time \(t\) is defined as follows:
\begin{equation}
\label{eq_tdi_1}
\delta^{i}_{t} = \frac{1}{\alpha_t}\left(Q_{t+1}(S_t, A_t) - Q_{t}(S_t, A_t)\right).
\end{equation}
Another way to think about the implicit TD error \eqref{eq_tdi_1} is by taking Equation \eqref{eq_q_learning_tabular_2} and ``solving'' for the TD error. Again, we will use Q-learning as a concrete example throughout this work, however, our definition of the implicit TD error encompasses all TD errors defined based on the difference between temporally successive predictions. This is formalized as Definition \ref{defn_tdi} below:
\vspace{4pt}
\begin{definition}[Implicit TD Error]
\label{defn_tdi}
An implicit TD error, \(\delta^i\), is any TD error defined based on the difference between temporally successive predictions.
\end{definition}

Having formally established the notions of the explicit and implicit TD errors, we can now start our characterization of the differences between the two. In particular, when we look at the tabular Q-learning algorithm \eqref{eq_q_learning_tabular}, it is easily seen that \(\delta^{e}_{t} = \delta^{i}_{t}\). This is formalized as Lemma \ref{lemma_1} below:
\vspace{4pt}
\begin{lemma}
\label{lemma_1}
In tabular settings, both interpretations of the TD error are always equivalent. That is, \(\delta^{e}_{t} = \delta^{i}_{t} \; \forall t \in \mathbb{N}\).
\end{lemma}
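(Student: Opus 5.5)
The plan is to substitute the tabular update rule \eqref{eq_q_learning_tabular_2} directly into the definition of the implicit TD error \eqref{eq_tdi_1} and simplify. Fix an arbitrary time step \(t \in \mathbb{N}\) and assume, as is implicit in \eqref{eq_tdi_1}, that the step size satisfies \(\alpha_t \neq 0\), so the implicit TD error is well defined.

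First, note from \eqref{eq_q_learning_tabular_1} and \eqref{eq_tde_1} that the quantity \(\delta_t\) used in the update is exactly the explicit TD error \(\delta^e_t\). Both are the bootstrapped target \(R_{t+1} + \gamma \max_{a_*} Q_t(S_{t+1}, a_*)\) minus the prediction \(Q_t(S_t, A_t)\), computed from the same table \(Q_t\).

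Second, the update \eqref{eq_q_learning_tabular_2} modifies the single table entry indexed by \((S_t, A_t)\). It gives \(Q_{t+1}(S_t, A_t) - Q_t(S_t, A_t) = \alpha_t \delta^e_t\). Dividing by \(\alpha_t\) and comparing with \eqref{eq_tdi_1} yields
\begin{equation*}
\delta^i_t = \frac{1}{\alpha_t}\left(Q_{t+1}(S_t, A_t) - Q_t(S_t, A_t)\right) = \frac{1}{\alpha_t}\,\alpha_t \delta^e_t = \delta^e_t .
\end{equation*}
Since \(t\) was arbitrary, the equality holds for all \(t\).

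There is essentially no obstacle. The only point needing care is the role of tabularity. Because each \((s,a)\) has its own independent table entry, and \eqref{eq_q_learning_tabular_3} leaves all other entries unchanged, the change in the prediction at \((S_t, A_t)\) is exactly \(\alpha_t \delta^e_t\). No generalization from other updates, and no interaction through shared parameters, contaminates it. This is precisely what fails for the network update \eqref{eq_dqn_2}.

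I would also remark that the same argument applies verbatim to any tabular method whose update has the form ``prediction \(\leftarrow\) prediction \(+\,\alpha_t\,\cdot\) (target \(-\) prediction)''. This justifies phrasing the lemma for tabular settings generally, consistent with Definitions \ref{defn_tde} and \ref{defn_tdi}.
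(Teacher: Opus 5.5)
Your proof is correct and takes essentially the same approach as the paper, which simply says the equality follows directly from the construction of tabular RL algorithms. Your substitution of \eqref{eq_q_learning_tabular_2} into \eqref{eq_tdi_1}, under the implicit assumption \(\alpha_t \neq 0\), is that argument written out explicitly.
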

\begin{proof}
This follows directly from the construction of tabular RL algorithms.
\end{proof}

Indeed, in tabular settings, the two interpretations of the TD error are always equivalent. However, we will now show that this is not always the case in function approximation settings, particularly with increasingly nonlinear deep RL architectures.

To this end, let us begin by extending our definitions of the tabular Q-learning TD errors \eqref{eq_tde_1} and \eqref{eq_tdi_1} into function approximation settings. In particular, in such RL settings, the explicit TD error at time \(t\) can be defined as follows:
\begin{equation}
\label{eq_tde_3}
\delta^{e}_t = \frac{1}{B}\sum_{j=1}^{B}\left[R_{j,t+1} + \gamma \max_{a_*}Q_{\boldsymbol{w}_t}(S_{j, t+1},a_*) - Q_{\boldsymbol{w}_t}(S_{j,t}, A_{j,t})\right],
\end{equation}
and the implicit TD error at time \(t\) can be defined as follows:
\begin{equation}
\label{eq_tdi_3}
\delta^{i}_t = \frac{1}{B}\sum_{j=1}^{B}\left[\frac{1}{\alpha_t}\left(Q_{\boldsymbol{w}_{t+1}}(S_{j,t}, A_{j,t}) - Q_{\boldsymbol{w}_t}(S_{j,t}, A_{j,t})\right)\right],
\end{equation}
where we interpret these TD errors as the \(B\)-sized batch average difference between relevant terms.

We are now ready to pose the central question that we seek to answer in this work: 
\begin{quote}
\centering
\textit{When are \(\delta^{e}_t\) \eqref{eq_tde_3} and \(\delta^{i}_t\) \eqref{eq_tdi_3} equivalent?}
\end{quote}

\subsection{The Two TD Errors Are Not Always Equivalent Even in Linear Function Approximation Settings}
\label{linear_fa}
In Lemma \ref{lemma_1}, we showed that the two interpretations of the TD error are always equivalent in tabular settings. In this section, we will show that even in the simplest kind of function approximation settings, this equivalence begins to break down.

To this end, let us begin by considering the explicit and implicit TD errors in the linear function approximation setting (with a batch size of 1):
\begin{subequations}
\label{eq_tde_4}
\begin{align}
\delta^{e}_t &= R_{t+1} + \gamma \max_{a_*}Q_{\boldsymbol{w}_t}(S_{t+1},a_*) - Q_{\boldsymbol{w}_t}(S_{t}, A_{t})\\
& = R_{t+1} + \gamma \max_{a_*}\boldsymbol{w}_{t}^{T}\boldsymbol{x}(S_{t+1},a_*) - \boldsymbol{w}_{t}^{T}\boldsymbol{x}(S_{t}, A_{t}),
\end{align}
\end{subequations}
and
\begin{subequations}
\label{eq_tdi_4}
\begin{align}
\delta^{i}_t &= \frac{1}{\alpha_t}\left(Q_{\boldsymbol{w}_{t+1}}(S_{t}, A_{t}) - Q_{\boldsymbol{w}_t}(S_{t}, A_{t})\right)\\
& = \frac{1}{\alpha_t}\left(\boldsymbol{w}_{t+1}^{T}\boldsymbol{x}(S_{t}, A_{t}) - \boldsymbol{w}_{t}^{T}\boldsymbol{x}(S_{t}, A_{t})\right),
\end{align}
\end{subequations}
where \(\boldsymbol{x}(S_t,A_t)\) denotes a feature vector with the same dimensions as the network parameter vector, \(\boldsymbol{w}\), for the state-action pair \((S_t, A_t)\). Next, as per Equation \eqref{eq_dqn_2}, we can write the standard \emph{semi-gradient} update for \(\boldsymbol{w}_{t+1}\) as follows \citep{Sutton1988-vs, Sutton2018-eh}:
\begin{subequations}
\label{eq_w_update}
\begin{align}
\label{eq_w_update_1}
\boldsymbol{w}_{t+1} &= \boldsymbol{w}_{t}
- \alpha_t \nabla_{\boldsymbol{w}_t}
\frac{1}{2}
\left(
R_{t+1}
+ \gamma \max_{a_* \in \mathcal{A}} Q_{\boldsymbol{w}_t}(S_{t+1},a_*)
- Q_{\boldsymbol{w}_t}(S_{t},A_{t})
\right)^2\\
\label{eq_w_update_2}
& \approx \boldsymbol{w}_{t}
+ \alpha_t \delta^{e}_t \boldsymbol{x}(S_{t}, A_{t}).
\end{align}
\end{subequations}

Finally, we can combine Equations \eqref{eq_tde_4}, \eqref{eq_tdi_4}, and \eqref{eq_w_update_2} as follows:
\begin{subequations}
\label{eq_tdi_vs_tde_linear}
\begin{align}
\label{eq_tdi_vs_tde_linear_1}
\delta^{i}_t &= \frac{1}{\alpha_t}\left(\boldsymbol{w}_{t+1}^{T}\boldsymbol{x}(S_{t}, A_{t}) - \boldsymbol{w}_{t}^{T}\boldsymbol{x}(S_{t}, A_{t})\right)\\
\label{eq_tdi_vs_tde_linear_2}
& = \frac{1}{\alpha_t}\left((\boldsymbol{w}_{t}
+ \alpha_t \delta^{e}_t \boldsymbol{x}(S_{t}, A_{t}))^{T}\boldsymbol{x}(S_{t}, A_{t}) - \boldsymbol{w}_{t}^{T}\boldsymbol{x}(S_{t}, A_{t})\right)\\
\label{eq_tdi_vs_tde_linear_3}
& = \delta^{e}_t \boldsymbol{x}(S_{t}, A_{t})^{T}\boldsymbol{x}(S_{t}, A_{t}).
\end{align}
\end{subequations}

Let us examine the implications of Equation \eqref{eq_tdi_vs_tde_linear_3}. We can see that the two TD errors are only equal to each other in the linear function approximation case if \(\boldsymbol{x}(S_{t}, A_{t})^{T}\boldsymbol{x}(S_{t}, A_{t}) \doteq ||\boldsymbol{x}(S_{t}, A_{t})||^2 = 1\), or if  \(\delta^e_t = \delta^i_t = 0\). We note that this result is consistent with Lemma \ref{lemma_1}, given that it is well-known that tabular RL methods can be viewed as a special case of linear function approximation with a feature vector that satisfies \(||\boldsymbol{x}(S_{t}, A_{t})||^2 = 1\).

As such, we have shown that even with linear function approximation, the implicit and explicit TD errors are not always equal to each other. We note that this result was hinted at in Exercise 9.6 of \citet{Sutton2018-eh}, which is formalized as Lemma \ref{lemma_2} below:
\vspace{4pt}
\begin{lemma}[Similar to Exercise 9.6 of \citet{Sutton2018-eh}]
\label{lemma_2}
In the linear function approximation setting, given an update batch size of 1, the explicit and implicit TD errors are only equal to each other at time \(t\) if and only if the feature vector satisfies \(||\boldsymbol{x}(S_{t}, A_{t})||^2 = 1\), or if at time \(t\) the RL algorithm has converged to a (potentially suboptimal) solution, such that \(\delta^e_t = \delta^i_t = 0\).
\end{lemma}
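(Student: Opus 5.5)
The plan is to reduce the statement to the identity already derived in Equation \eqref{eq_tdi_vs_tde_linear_3}, namely $\delta^{i}_t = \delta^{e}_t\,\|\boldsymbol{x}(S_t,A_t)\|^2$. I will then read the biconditional off this scalar equation. Everything after the identity is one line of algebra; the real work is justifying the identity itself.

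\textbf{Step 1: compute the semi-gradient.} I would first make the ``$\approx$'' in Equation \eqref{eq_w_update_2} precise by fixing the update convention. The update is the semi-gradient one: the bootstrapped target $R_{t+1}+\gamma\max_{a_*}Q_{\boldsymbol{w}_t}(S_{t+1},a_*)$ is treated as a constant. Under that convention, with $Q_{\boldsymbol{w}}(s,a)=\boldsymbol{w}^T\boldsymbol{x}(s,a)$, we have
\begin{equation*}
\nabla_{\boldsymbol{w}_t}\tfrac{1}{2}\bigl(\text{target}-\boldsymbol{w}_t^T\boldsymbol{x}(S_t,A_t)\bigr)^2=-\delta^e_t\,\boldsymbol{x}(S_t,A_t).
\end{equation*}
This gives the exact equality $\boldsymbol{w}_{t+1}=\boldsymbol{w}_t+\alpha_t\delta^e_t\boldsymbol{x}(S_t,A_t)$.

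\textbf{Step 2: substitute.} Plugging this into the definition of $\delta^i_t$ in Equation \eqref{eq_tdi_4} gives $\delta^i_t=\delta^e_t\,\boldsymbol{x}(S_t,A_t)^T\boldsymbol{x}(S_t,A_t)$ exactly, as in Equation \eqref{eq_tdi_vs_tde_linear}. Here $\alpha_t>0$, which is needed so that $\delta^i_t$ is well defined.

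\textbf{Step 3: the biconditional.} The identity gives
\begin{equation*}
\delta^i_t-\delta^e_t=\delta^e_t\bigl(\|\boldsymbol{x}(S_t,A_t)\|^2-1\bigr).
\end{equation*}
A product of real numbers vanishes if and only if one factor vanishes. So $\delta^i_t=\delta^e_t$ holds if and only if $\|\boldsymbol{x}(S_t,A_t)\|^2=1$ or $\delta^e_t=0$.

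\textbf{Step 4: the converged case.} In the second case, the identity also forces $\delta^i_t=0$, which yields the stated form $\delta^e_t=\delta^i_t=0$. I would remark that this is how ``converged to a (potentially suboptimal) solution'' is to be read. It is a pointwise condition at time $t$: $\delta^e_t=0$ means $\boldsymbol{w}_{t+1}=\boldsymbol{w}_t$, so the update leaves the parameters at a fixed point for this sample. It is not a global convergence claim. Both directions of the ``if and only if'' then follow immediately.

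\textbf{Main obstacle.} The only delicate point is Step 1, the status of the approximation in Equation \eqref{eq_w_update_2}. If one took the full gradient through the $\max$ term, an extra term $-\gamma\,\delta^e_t\,\boldsymbol{x}(S_{t+1},a^*)$ would appear. This term would change the identity and break the clean characterization. I would handle this by stating explicitly that the lemma concerns the standard semi-gradient update, where equality is exact. Under that convention the rest is immediate.
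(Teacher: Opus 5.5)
Your proposal is correct and takes essentially the paper's route: the paper substitutes the semi-gradient update \eqref{eq_w_update_2} into \eqref{eq_tdi_4} to get $\delta^i_t = \delta^e_t\,\|\boldsymbol{x}(S_t,A_t)\|^2$ in Equation \eqref{eq_tdi_vs_tde_linear}, then reads the two cases off that identity. Your factorization $\delta^i_t-\delta^e_t=\delta^e_t(\|\boldsymbol{x}(S_t,A_t)\|^2-1)$, together with your observation that the ``$\approx$'' in \eqref{eq_w_update_2} is an exact equality under the semi-gradient convention, makes rigorous the ``if and only if'' that the paper only states informally.
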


\subsection{Nonlinearity Can Cause Further Divergence Between the Two TD Errors}
\label{nonlinear_fa}
In the previous section, we saw that even in the linear function approximation case the two TD errors are not always equal to each other. In this section, we will show that adding even just a single nonlinearity into the function approximator complicates the picture even further. In particular, let us consider the effect of adding a single nonlinear activation function to the linear function approximator described in Section \ref{linear_fa} (while still maintaining a batch size of 1), such that \(Q_{\boldsymbol{w}_{t}}(S_{t}, A_{t}) = \sigma(\boldsymbol{w}_{t}^{T}\boldsymbol{x}(S_{t}, A_{t}))\), where \(\sigma\) denotes the nonlinear activation function. Applying a similar process to the one shown in Equation \eqref{eq_tdi_vs_tde_linear} yields:
\begin{subequations}
\label{eq_tdi_vs_tde_nonlinear}
\begin{align}
\label{eq_tdi_vs_tde_nonlinear_1}
\delta^{i}_t &= \frac{1}{\alpha_t}\left[\sigma(\boldsymbol{w}_{t+1}^{T}\boldsymbol{x}(S_{t}, A_{t})) - \sigma(\boldsymbol{w}_{t}^{T}\boldsymbol{x}(S_{t}, A_{t}))\right]\\
\label{eq_tdi_vs_tde_nonlinear_2}
& = \frac{1}{\alpha_t}\left[\sigma \left((\boldsymbol{w}_{t}
+ \alpha_t \delta^{e}_t \nabla_{\boldsymbol{w}_t}\sigma(\boldsymbol{w}_{t}^T\boldsymbol{x}(S_{t}, A_{t})))^{T}\boldsymbol{x}(S_{t}, A_{t})\right) - \sigma(\boldsymbol{w}_{t}^{T}\boldsymbol{x}(S_{t}, A_{t}))\right].
\end{align}
\end{subequations}

As per Equation \eqref{eq_tdi_vs_tde_nonlinear_2}, we can see that even adding a single nonlinear activation function greatly limits our ability to compare the explicit and implicit TD errors from an analytical perspective. That is, while it may be \emph{possible} that for some choice of feature vector and (nonlinear) activation function that the expression on the right-hand side of Equation \eqref{eq_tdi_vs_tde_nonlinear_2} evaluates to \(\delta^{e}_t\), we can no longer provide such guarantees in the general case from an analytical perspective. Moreover, it is not hard to imagine how adding more layers and nonlinearities compounds this analytical divergence, ultimately making an exact (analytical) equivalence between the two TD errors increasingly unlikely.

\subsection{Batch Updates Can Also Induce Divergence Between the Two TD Errors}
\label{batch_updates}
In this section, we explore, from an analytical perspective, the effect of batch updates as it pertains to the equivalence of the two TD errors. Given that we showed in Section \ref{nonlinear_fa} how nonlinearity limits our ability to make an analytical comparison between the two TD errors, we will examine the effect of batch updates in the linear function approximation case. In particular, for a batch update of size \(B\), we can apply a similar process to the one described in Equation \eqref{eq_tdi_vs_tde_linear}, which yields:
\begin{subequations}
\label{eq_tdi_vs_tde_batch} 
\begin{align} 
\label{eq_tdi_vs_tde_batch_1}
\delta^{i}_{t} \doteq \frac{1}{B}\sum_{j=1}^{B}\delta^{i}_{j,t} &= \frac{1}{B}\sum_{j=1}^{B}\frac{1}{\alpha_t}\left(\boldsymbol{w}_{t+1}^{T}\boldsymbol{x}(S_{j,t}, A_{j,t}) - \boldsymbol{w}_{t}^{T}\boldsymbol{x}(S_{j,t}, A_{j,t})\right)\\
\label{eq_tdi_vs_tde_batch_2}
& = \frac{1}{B}\sum_{j=1}^{B}\frac{1}{\alpha_t}\left[\left(\boldsymbol{w}_{t} + \alpha_t \frac{1}{B}\sum_{k=1}^{B} \delta^{e}_{k,t} \boldsymbol{x}(S_{k,t}, A_{k,t})\right)^{T}\boldsymbol{x}(S_{j,t}, A_{j,t}) - \boldsymbol{w}_{t}^{T}\boldsymbol{x}(S_{j,t}, A_{j,t})\right]\\
\label{eq_tdi_vs_tde_batch_3}
& = \frac{1}{B}\sum_{k=1}^{B} \delta^{e}_{k,t} \frac{1}{B} \sum_{j=1}^{B} \boldsymbol{x}(S_{k,t}, A_{k,t})^{T}\boldsymbol{x}(S_{j,t}, A_{j,t}).
\end{align} 
\end{subequations}

In this case, we would consider the explicit and implicit TD errors equivalent if their batch-averaged values are the same. From Equation \eqref{eq_tdi_vs_tde_batch_3}, we can see that this occurs if: 
\begin{equation}
\label{eq_batch_condition}
\frac{1}{B} \sum_{j=1}^{B} \boldsymbol{x}(S_{k,t}, A_{k,t})^{T}\boldsymbol{x}(S_{j,t}, A_{j,t}) = 1, \, \forall k=1, 2, \ldots, B.
\end{equation}
In words, this condition requires that for every sample \(k\) in the batch, the average inner product between its feature vector and all feature vectors in the batch (including itself) equals one. This condition is considerably more restrictive than the single-sample case, where we only required \(||\boldsymbol{x}(S_{t}, A_{t})||^2 = 1\). Importantly, this means that if the samples in a given batch are sufficiently diverse, such that their corresponding feature vectors point in different directions, then these average inner products will deviate from one. Consequently, this shows that even in the linear function approximation case, batch updates introduce an additional source of divergence between the explicit and implicit TD errors. This argument is formalized as Lemma \ref{lemma_3} below:
\vspace{4pt}
\begin{lemma}
\label{lemma_3}
In the linear function approximation setting with batch updates, the explicit and implicit TD errors are only equal to each other at time \(t\) if and only if Equation \eqref{eq_batch_condition} is satisfied, or if at time \(t\) the RL algorithm has converged to a (potentially suboptimal) solution, such that \(\delta^e_t = \delta^i_t = 0\). 
\end{lemma}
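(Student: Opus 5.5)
The plan follows the template of Lemma \ref{lemma_2}: derive an exact algebraic identity linking the batch-averaged implicit and explicit TD errors, then read off when they coincide. First I will write the batch semi-gradient update. Differentiating the loss \eqref{eq_dqn_1} for a linear critic, with the bootstrapped target held fixed, gives
\begin{equation*}
\boldsymbol{w}_{t+1} = \boldsymbol{w}_t + \alpha_t \frac{1}{B}\sum_{k=1}^{B}\delta^e_{k,t}\,\boldsymbol{x}(S_{k,t},A_{k,t}),
\end{equation*}
where \(\delta^e_{k,t}\) is the per-sample explicit TD error. Substituting this into the definition of \(\delta^i_t\) in \eqref{eq_tdi_3}, the \(\boldsymbol{w}_t^T\boldsymbol{x}\) terms cancel and the factor \(\alpha_t\) divides out. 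Exchanging the two finite sums then gives exactly \eqref{eq_tdi_vs_tde_batch_3}. Writing \(c_k \doteq \frac{1}{B}\sum_{j}\boldsymbol{x}(S_{k,t},A_{k,t})^T\boldsymbol{x}(S_{j,t},A_{j,t})\), this reads \(\delta^i_t = \frac{1}{B}\sum_k c_k\,\delta^e_{k,t}\). In the same notation, \(\delta^e_t = \frac{1}{B}\sum_k \delta^e_{k,t}\).

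Next I will compare the two expressions, which gives \(\delta^i_t - \delta^e_t = \frac{1}{B}\sum_k (c_k - 1)\delta^e_{k,t}\). For the ``if'' direction there are two cases. If \eqref{eq_batch_condition} holds, every \(c_k - 1\) vanishes and the difference is zero. If the algorithm has converged in the sense that every per-sample error \(\delta^e_{k,t}\) is zero, then \(\boldsymbol{w}_{t+1} = \boldsymbol{w}_t\), so \(\delta^i_t = \delta^e_t = 0\). The batch-size-one case reduces to Lemma \ref{lemma_2}, since then \(c_1 = \|\boldsymbol{x}(S_t,A_t)\|^2\); this serves as a consistency check.

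The main obstacle is the ``only if'' direction. Read literally, it fails: the single linear equation \(\sum_k (c_k - 1)\delta^e_{k,t} = 0\) can hold through cancellation without either stated condition. To handle this, I will adopt the same interpretation under which the paper states Lemma \ref{lemma_2}, namely that the equivalence is required as an identity in the TD errors. Concretely, I require \(\delta^i_t = \delta^e_t\) to hold for every realization of the per-sample errors \((\delta^e_{1,t},\dots,\delta^e_{B,t})\) given the batch features. For example, the rewards \(R_{k,t+1}\) enter \(\delta^e_{k,t}\) additively and can be varied independently of the features. Under this reading, the difference is a linear form in the vector \((\delta^e_{k,t})_k\) that must vanish identically, so every coefficient \(c_k - 1\) is zero, which is exactly \eqref{eq_batch_condition}. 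The alternative is that the errors themselves are all zero, which is the converged case.

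I would state this interpretation explicitly. I would also note that without it the characterization is only a sufficient condition plus a generic necessity: it holds for all error vectors outside the hyperplane \(\sum_k (c_k - 1)\delta^e_{k,t} = 0\). That matches the level of rigor of Lemma \ref{lemma_2}.
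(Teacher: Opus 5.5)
Your computation is exactly the paper's: substitute the batch semi-gradient update into \eqref{eq_tdi_3} to reach \eqref{eq_tdi_vs_tde_batch_3}, then read off \eqref{eq_batch_condition}; the paper itself only argues that equality ``occurs if'' that condition holds, and you are right that the literal converse fails for $B\ge 2$ (e.g.\ with $B=2$, orthogonal features $(2,0)$ and $(0,1)$, and per-sample errors $1$ and $2$, one gets $\delta^e_t=\delta^i_t=3/2$ although $c_1=2$), so stating your identity-in-the-errors reading explicitly is an appropriate repair. One correction to your framing: Lemma~\ref{lemma_2} holds pointwise and needs no such reading, because for $B=1$ the difference $(\|\boldsymbol{x}(S_t,A_t)\|^2-1)\,\delta^e_t$ is a product that vanishes only if a factor does, so the cancellation problem arises only in the batch case.
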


\subsection{An Empirical Comparison Between the Two TD Errors}
\label{empirical_compare}
In the previous sections, we showed, from an analytical perspective, how the equivalence between the explicit and implicit TD errors can break down in deep RL settings. In this section, we complement these analytical results by performing an empirical characterization of this divergence. 

To this end, we compared the explicit and implicit TD errors when applying Q-learning algorithms on two RL tasks. In the first task, we used a Q-learning algorithm on the well-known \emph{inverted pendulum} environment. Then, in the second task, we used a DQN algorithm \citep{Mnih2015-un} on the Atari \emph{Breakout} environment \citep{Bellemare2013-ef}. In both experiments, we compared the two TD errors when introducing increased nonlinearity into the function approximator. In particular, in the inverted pendulum experiment, we compared the two TD errors when using a linear function approximator (with no batch updates) vs. a function approximator with two hidden layers and batch updates. Similarly, in the Breakout experiment, we compared the two TD errors when using a value network with 0.4M parameters vs. a value network with 6.7M parameters (in both cases, we used batch updates, and the value network had three convolutional layers and two fully connected layers).

\begin{figure}[htbp]
\centerline{\includegraphics[scale=0.53]{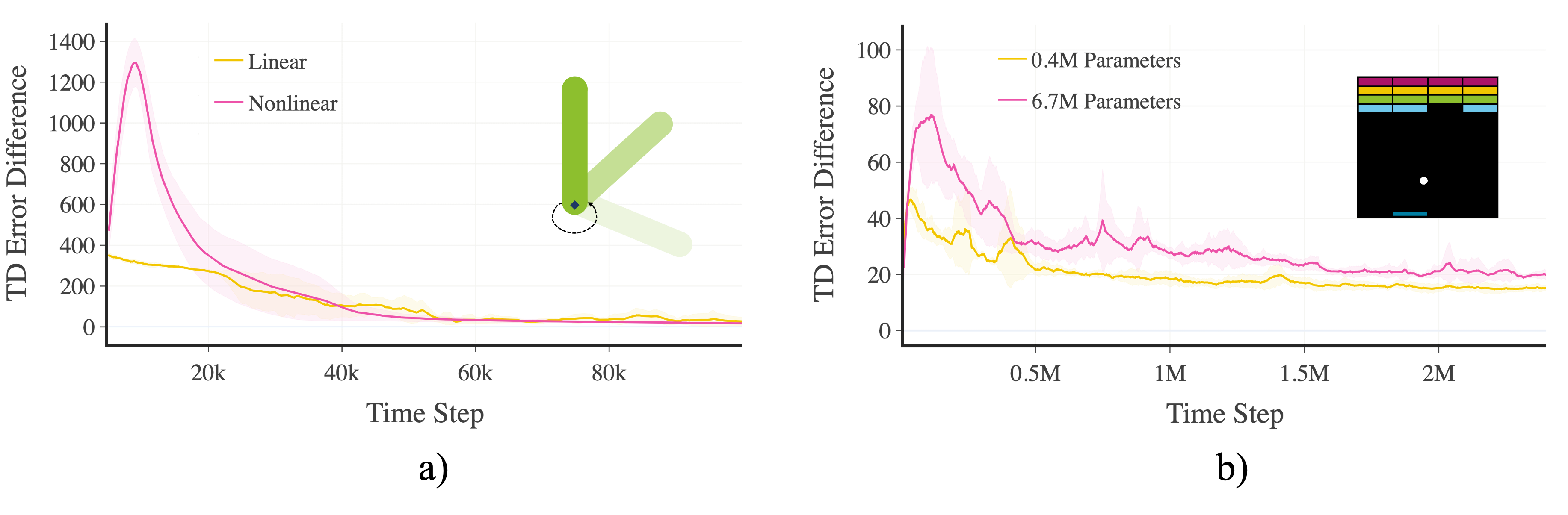}}
\caption{Rolling average of the TD error absolute difference (i.e., \(|\delta^e_t - \delta^i_t|\)) as learning progresses when using \textbf{a)} a Q-learning algorithm in the inverted pendulum environment, and \textbf{b)} a DQN algorithm in the Atari Breakout environment. A solid line denotes the mean TD error absolute difference, and a shaded region denotes a 95\% confidence interval over 4 runs.}
\label{fig_td_diff}
\end{figure}

Figure \ref{fig_td_diff} shows the results of this comparison. In particular, the figure shows the \emph{TD error absolute difference} (i.e., \(|\delta^e_t - \delta^i_t|\)) as learning progresses in both tasks. As shown in Figure \ref{fig_td_diff}\textcolor{mylightblue}{a)}, even in linear function approximation settings, there is still divergence between the explicit and implicit TD errors. We can also see that, in both experiments, this divergence is only intensified by the inclusion of more nonlinearity into the function approximator. As expected, the figure shows that, in both cases, the divergence between the two TD errors becomes smaller as the algorithm converges to a (potentially suboptimal) solution. The full set of experimental details can be found in Appendix \ref{appendix_experiments}.

\section{The Choice of TD Error Matters}
\label{choice_matters}
In the previous section, we showed that increasingly nonlinear deep RL architectures can cause the two interpretations of the TD error to yield increasingly different numerical values. In this section, we leverage this insight to show how choosing one interpretation of the TD error over the other can affect the performance of deep RL algorithms that utilize the TD error to compute other quantities.

In particular, we will focus on evaluating the effect of using the explicit vs. implicit TD error in deep differential Q-learning algorithms \citep{Wan2021-re, Rojas2026-dv}, Q-learning algorithms with value-based reward centering \citep{Naik2024-km}, and A2C algorithms \citep{Mnih2016-tt}. Each of these algorithms utilize the TD error to compute other quantities. In differential and reward centering algorithms, the TD error is used to compute a running estimate of the average of the one-step rewards. In A2C, the TD error is used as a proxy for the advantage function in the actor loss. We discuss the role of the TD error in each class of algorithm in more detail below:

\begin{figure}[htbp]
\centerline{\includegraphics[scale=0.53]{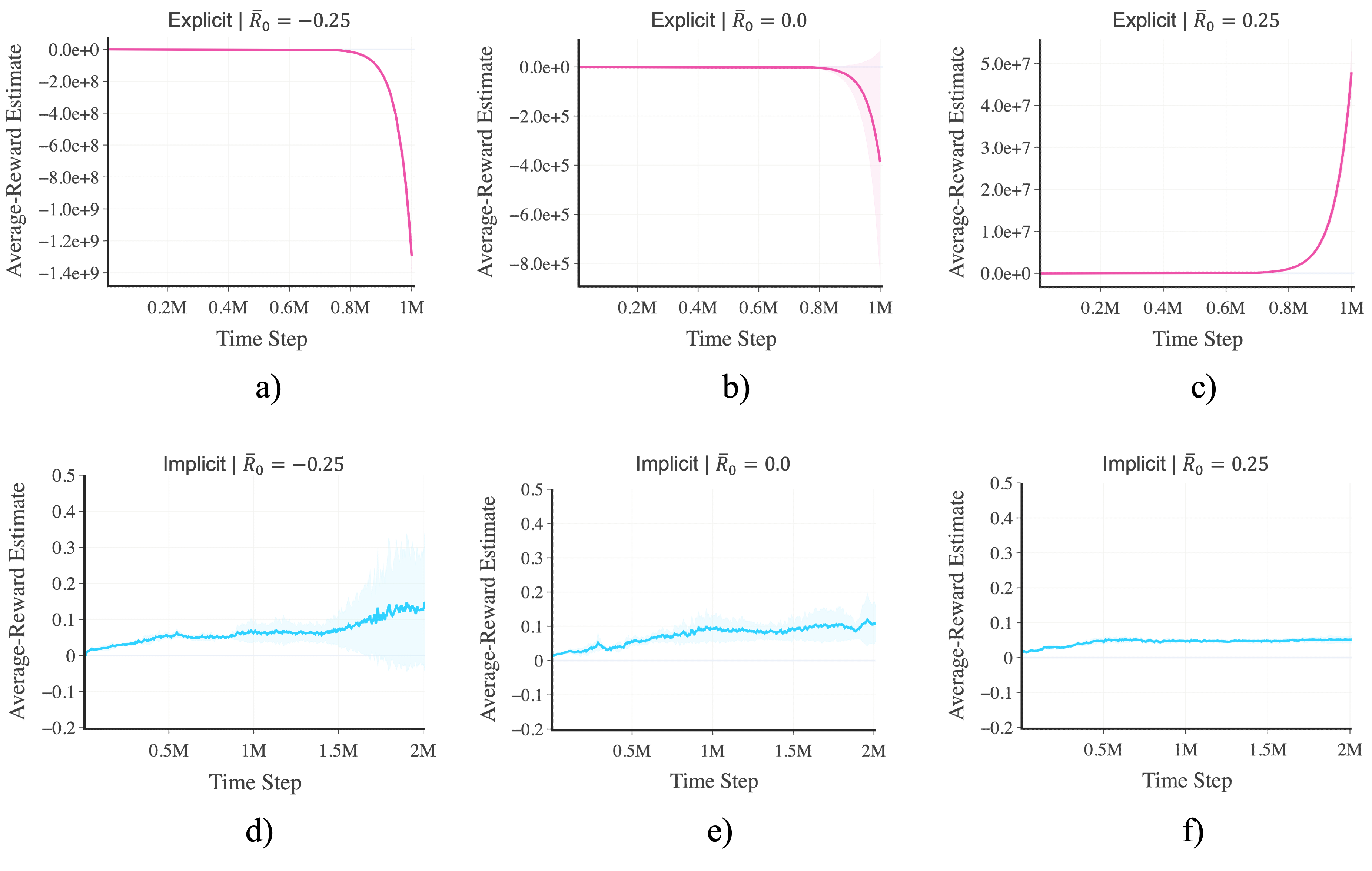}}
\caption{Rolling average-reward estimates when using a deep differential Q-learning algorithm in the Breakout environment. Figures \textbf{a)}, \textbf{b)}, and \textbf{c)} utilize an \emph{explicit} TD error average-reward update with an initial guess of -0.25, 0.0, and 0.25, respectively. Figures \textbf{d)}, \textbf{e)}, and \textbf{f)} utilize an \emph{implicit} TD error update with an initial guess of -0.25, 0.0, and 0.25, respectively. A solid line denotes the mean average-reward estimate, and a shaded region denotes a 95\% confidence interval over 4 runs.}
\label{fig_estimates}
\end{figure}

\subsection{Differential Algorithms and the TD Error}
\label{differential}
In differential RL algorithms \citep{Wan2021-re}, the agent aims to find a policy that optimizes the long-run (or limiting) average-reward, \(\bar{r}\), which is defined as follows for a given policy, \(\pi\):
\begin{equation}
\label{eq_avg_reward}
\bar{r}_{\pi}(s) \doteq  \lim_{n \rightarrow{} \infty} \frac{1}{n} \sum_{t=1}^{n} \mathbb{E}[R_t \mid S_0=s, A_{0:t-1} \sim \pi].
\end{equation}
Typically, differential algorithms invoke \emph{ergodicity-like} assumptions (e.g. see \citet{Wan2021-re}) so that the average-reward objective \eqref{eq_avg_reward} remains well-defined and becomes independent of initial conditions (i.e., \(\bar{r}_{\pi}(s) = \bar{r}_{\pi}\)). To optimize the average-reward \eqref{eq_avg_reward}, differential methods optimize \emph{differential} value functions, such as the state-action differential value function:
\begin{equation}
\label{eq_diff_value_function}
Q_{\pi}(s, a) \doteq \mathbb{E}_{\pi}\!\left[ \sum_{t=0}^{\infty} (R_{t+1} - \bar{r}_{\pi}) \mid S_0 = s, A_0 = a, A_{1:\infty} \sim \pi \right].
\end{equation}
Crucially, optimizing the differential value function \eqref{eq_diff_value_function} provides a means to indirectly optimize the average-reward objective \eqref{eq_avg_reward}, such that optimizing the differential value function indirectly optimizes the average-reward objective \eqref{eq_avg_reward}. 

Importantly, to optimize the value function, differential algorithms need to maintain a running estimate of the average-reward. From an algorithmic perspective, this means that the algorithms start with initial estimates (or guesses) for the value function \emph{and} average-reward, then update these estimates over time, until they have learned and/or optimized these two objectives. There are different ways through which the average-reward estimate can be updated. One way is by using empirical reward samples from the environment (e.g. \citet{Adamczyk2025-cv}). Another such way is via the TD error. For instance, a tabular Differential Q-learning algorithm \citep{Wan2021-re}, which updates the average-reward estimate with the TD error, is shown below:

\begin{subequations}
\label{eq_diff_q_learning_tabular}
\begin{align}
\label{eq_diff_q_learning_tabular_1}
& \delta_{t} = R_{t+1} - \bar{R}_{t} +  \max_{a_*}Q_{t}(S_{t+1}, a_*) - Q_{t}(S_t, A_t)\\
\label{eq_diff_q_learning_tabular_2}
& Q_{t+1}(S_t, A_t) = Q_{t}(S_t, A_t) + \alpha_{t}\delta_{t}\\
\label{eq_diff_q_learning_tabular_3}
& Q_{t+1}(s, a) = Q_{t}(s, a), \quad \forall (s,a) \neq (S_t, A_t)\\
\label{eq_diff_q_learning_tabular_4}
& \bar{R}_{t+1} = \bar{R}_{t} + \eta\alpha_{t}\delta_{t}\, ,
\end{align}
\end{subequations}
where \(\bar{R}_{t}\) denotes the average-reward estimate and \(\eta > 0\) denotes a step size parameter.

As per Lemma \ref{lemma_1}, the choice of TD error by which to update the average-reward estimate does not matter in the tabular case. However, we will now show that in deep RL settings, the choice of explicit vs. implicit TD error for the average-reward update (i.e., Equation \eqref{eq_diff_q_learning_tabular_4}) can have a profound effect. In particular, consider Figure \ref{fig_estimates}, which shows the average-reward estimate when using a deep differential Q-learning algorithm with explicit vs. implicit TD error average-reward updates in the Breakout environment. As shown in the figure, using the explicit TD error can cause the average-reward estimate to diverge. Conversely, using the implicit TD error mitigates this instability.

Indeed, as per Figure \ref{fig_estimates}, the choice of TD error has a drastic effect on TD-based average-reward updates in deep RL settings. But why? A closer examination of the convergence proof for the \emph{tabular} Differential Q-learning algorithm \citep{Wan2021-re} reveals some insight. In particular, the convergence proof utilized a TD error representation that could be interpreted as an ``\(n\)-step variant'' of the implicit TD error (see Equation B.12 of \citet{Wan2021-re}). That is, the average-reward estimate, \(\bar{R}_t\), was shown to be a reliable estimate of the actual average-reward, \(\bar{r}_\pi\), when updated with the TD error, but the specific interpretation of the TD error used to make that claim was that of the implicit TD error. 

Accordingly, if the explicit TD error is used to update the average-reward estimate in deep RL settings, the divergence between the two TD errors degrades the quality of the average-reward estimate, and introduces instability and nonstationarity into the explicit TD error target. More formally, let \(\epsilon_t \doteq \delta^e_t - \delta^i_t\) denote the difference between the two TD errors at time \(t\), such that if at time \(t\) the average-reward estimate is updated with the explicit TD error, we have that:
\begin{subequations}
\label{eq_ar_update}
\begin{align}
\bar{R}_{t+1} &= \bar{R}_{t} + \eta\alpha_{t}\delta^e_{t}\\
& = \bar{R}_{t} + \eta\alpha_{t}(\delta^i_{t} + \epsilon_t)\\
& = \bar{R}_{t} + \eta\alpha_{t}\delta^i_{t} \textcolor{red}{\; + \; \eta\alpha_{t}\epsilon_t}\\
& \doteq \bar{R}^i_{t+1} \textcolor{red}{\; + \; \eta\alpha_{t}\epsilon_t},
\end{align}
\end{subequations}
where the \(\bar{R}^i_{t+1} \doteq \bar{R}_{t} + \eta\alpha_{t}\delta^i_{t}\) term constitutes the average-reward update that would have been made if the estimate was updated via the implicit TD error. Hence, if there is large divergence between the two TD errors, as we have seen is the case in increasingly deep RL settings, then the otherwise-reliable estimate of the average-reward is degraded by this \(\textcolor{red}{\eta\alpha_{t}\epsilon_t}\) term. Moreover, since the average-reward estimate is used to compute the explicit TD error target, this divergence introduces additional instability and nonstationarity into the agent's learning. That is, we have that:
\begin{subequations}
\label{eq_diff_target}
\begin{align}
\label{eq_diff_target_1}
\delta^e_t
&\doteq \frac{1}{B}\sum_{j=1}^{B}\left[R_{j,t+1} - \bar{R}_t + \max_{a_*}Q_{\boldsymbol{w}_t}(S_{j, t+1},a_*) - Q_{\boldsymbol{w}_t}(S_{j,t}, A_{j,t})\right]
\\
\label{eq_diff_target_2}
&= \frac{1}{B}\sum_{j=1}^{B}\left[R_{j,t+1} - \bar{R}^i_{t} \textcolor{red}{\; - \; \eta\alpha_{t-1}\epsilon_{t-1}} + \max_{a_*}Q_{\boldsymbol{w}_t}(S_{j, t+1},a_*) - Q_{\boldsymbol{w}_t}(S_{j,t}, A_{j,t})\right],
\end{align}
\end{subequations}
such that the \(\textcolor{red}{\eta\alpha_{t-1}\epsilon_{t-1}}\) term creates a source of instability and nonstationarity when learning not just the average-reward, but also the value function. Critically, if this degraded explicit TD error is then used to update the estimate of the average-reward in subsequent time steps, then the error in the average-reward estimate will compound, such that after \(n\) updates:
\begin{equation}
\label{eq_ar_update_compound}
\bar{R}_{n} = \bar{R}_{0} + \eta\sum_{t=0}^{n-1}\alpha_{t}\delta^i_{t} \textcolor{red}{\; + \; \eta\sum_{t=0}^{n-1}\alpha_{t}\epsilon_t}.
\end{equation}

\newpage

This compounded error term can hence be viewed as one of the sources of instability in the average-reward estimates displayed in Figure \ref{fig_estimates} for deep differential algorithms that utilize the explicit TD error to update the average-reward estimate. To the best of our knowledge, this result constitutes the first theoretically driven explanation for the instability of deep differential algorithms with TD error-based average-reward updates. We emphasize that this may only be one of many possible sources of instability in deep differential algorithms, and that further theoretical work may be required to explore other sources of instability.

Importantly, we note that despite the \(n\)-step implicit TD error interpretation being used in the convergence proof for tabular Differential Q-learning \citep{Wan2021-re}, to the best of our knowledge, our work is the first to propose updating the average-reward estimate by using the implicit TD error (the algorithms proposed in \citet{Wan2021-re} use the explicit TD error). Consequently, our work is only the second to provide a deep differential Q-learning algorithm with a stable TD error-based average-reward update. The first was the baseline used in \citet{Rojas2026-dv}, which used the smallest-magnitude explicit TD error in a batch, rather than the batch-average explicit TD error. However, this smallest-magnitude update is not theoretically motivated, and appears to be more of a heuristic than a principled approach. In Section \ref{experiments}, we compare our proposed differential Q-learning algorithm with implicit TD error updates (see Appendix \ref{appendix_algs} for the formal pseudocode) with the smallest-magnitude baseline used in \citet{Rojas2026-dv}.

\subsection{Reward Centering Algorithms and the TD Error}
\label{centering}
In reward centering algorithms \citep{Naik2024-km}, the agent aims to find a policy that optimizes the discounted value function (i.e., Equation \eqref{eq_disc_value}), such that all the one-step rewards are ``centered'' by subtracting their average. For example, in the context of Q-learning, we have that:
\begin{equation}
\label{eq_centering_value}
Q_{\pi}(s, a) \doteq \mathbb{E}_{\pi}\!\left[ \sum_{t=0}^{T} \gamma^{t} (R_{t+1} - \bar{r}_{\pi}) \mid S_0 = s, A_0 = a, A_{1:T} \sim \pi \right].
\end{equation}
Like the differential algorithms discussed in Section \ref{differential}, the average-reward, \(\bar{r}_{\pi}\), needs to be estimated in reward centering algorithms. There are different ways through which this estimate can be updated. In \emph{value-based reward centering}, the estimate is updated via the TD error. Crucially, like the differential methods discussed in Section \ref{differential}, the convergence proof for the average-reward estimate in tabular value-based reward centering algorithms utilizes the implicit TD error interpretation (see Appendix B in \citet{Naik2024-km}). Accordingly, one can show, using an identical process to the one described in Section \ref{differential} for differential algorithms, that value-based reward centering algorithms can be unstable when utilizing batch-averaged explicit TD error average-reward updates in increasingly deep RL settings, and that utilizing batch-averaged implicit TD error updates mitigates this instability. In Section \ref{experiments} we compare the empirical performance when using explicit vs. implicit batch-averaged TD error updates in reward centering algorithms.

\subsection{A2C Algorithms and the TD Error}
\label{a2c}
In Advantage Actor-Critic (A2C) algorithms \citep{Mnih2016-tt}, the agent aims to optimize a parameterized policy (the actor) alongside a parameterized state-value function (the critic, \(V_{\boldsymbol{w}_t}(S_{t})\)). The critic is trained to approximate the value function, while the actor is trained using an estimate of the \emph{advantage function}, which measures, at a given state, how much better taking a specific action is relative to the average performance under the current policy at that state. In particular, A2C algorithms use the TD error as a proxy for the advantage. That is, the actor is trained by minimizing a loss function in which the TD error acts as an estimate of the advantage, such that:
\begin{equation}
\label{eq_a2c_loss}
\mathcal{L}_{\text{actor}}(\boldsymbol{u}_t) \doteq - \log \pi_{\boldsymbol{u}_t}(A_t \mid S_t)\,\delta_t \, .
\end{equation}

where \(\pi_{\boldsymbol{u}_t}\) denotes the parameterized policy and \(\boldsymbol{u}_t\) denotes the policy parameters at time \(t\).

A2C algorithms that utilize the explicit TD error (in this case, in the actor loss) have been shown to yield good performance in deep RL settings \citep{Mnih2016-tt}, and do not suffer from the same instability issues as the deep differential and (value-based) reward centering algorithms described in Sections \ref{differential} and \ref{centering}, respectively. As such, we will focus our exploration of A2C algorithms in terms of comparing the empirical performance when using the explicit vs. implicit TD errors as estimates of the advantage. This empirical comparison is done in Section \ref{experiments}.

\newpage

In this regard, one theoretical consideration remains: \textit{does the implicit TD error provide a viable estimate of the advantage function?} In Proposition \ref{proposition_1}, we argue that the implicit TD error indeed provides a viable estimate of the advantage function:
\vspace{4pt}
\begin{proposition}
\label{proposition_1}
The implicit TD error provides a viable estimate of the advantage function.
\end{proposition}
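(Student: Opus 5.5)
We make ``viable'' precise in the standard way used to justify the explicit TD error in A2C. An estimator \(\hat{A}_t\) is a viable advantage estimate if its conditional expectation given \((S_t,A_t)\) equals the true advantage \(A_\pi(S_t,A_t) = Q_\pi(S_t,A_t) - V_\pi(S_t)\) when the critic is exact, and if it tracks this quantity up to controlled error when the critic is approximate. The plan is to show that \(\delta^i_t\) inherits this property from \(\delta^e_t\). We first recall the classical fact for the explicit TD error. If \(V_{\boldsymbol{w}_t} = V_\pi\), then
\begin{equation*}
\mathbb{E}\!\left[\delta^e_t \mid S_t, A_t\right] = \mathbb{E}\!\left[R_{t+1} + \gamma V_\pi(S_{t+1}) \mid S_t, A_t\right] - V_\pi(S_t) = A_\pi(S_t,A_t).
\end{equation*}
The same statement holds in the differential setting, with \(R_{t+1}-\bar{r}_\pi\) in place of the reward and \(\gamma=1\).

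Second, we relate \(\delta^i_t\) to \(\delta^e_t\) by a first-order expansion of the critic update. With batch size 1 and the semi-gradient step \(\boldsymbol{w}_{t+1} = \boldsymbol{w}_t + \alpha_t \delta^e_t \nabla_{\boldsymbol{w}} V_{\boldsymbol{w}_t}(S_t)\), a Taylor expansion of \(V_{\boldsymbol{w}_{t+1}}(S_t)\) about \(\boldsymbol{w}_t\) gives
\begin{equation*}
\delta^i_t = \delta^e_t \,\|\nabla_{\boldsymbol{w}} V_{\boldsymbol{w}_t}(S_t)\|^2 + O(\alpha_t).
\end{equation*}
This generalizes the linear identity \eqref{eq_tdi_vs_tde_linear_3}. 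The batch case is handled analogously via \eqref{eq_tdi_vs_tde_batch_3}, where the scalar factor is replaced by the kernel-weighted average \(\frac{1}{B}\sum_j \nabla V(S_k)^T\nabla V(S_j)\). Consequently \(\delta^i_t\) equals \(\delta^e_t\) multiplied by a nonnegative, state-dependent (neural-tangent-kernel) factor \(\kappa_t(S_t) \ge 0\), plus a vanishing remainder.

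Third, we argue that this factor preserves what the actor needs. The factor \(\kappa_t(S_t)\) depends only on the state, not on the action. Because \(\kappa_t \ge 0\), the sign of the advantage is preserved: actions better than average are still reinforced and actions worse than average are still discouraged. The action ranking at each state is also unchanged. The policy-gradient direction \(\mathbb{E}[\nabla\log\pi(A_t\mid S_t)\,\kappa_t(S_t) A_\pi(S_t,A_t)]\) is then a state-reweighted policy gradient. This can be read either as a per-state positive step-size rescaling, or as a gradient under a reweighted state distribution. In particular, it has a nonnegative inner product with each per-state improvement direction, so policy improvement still holds. In addition, Lemma \ref{lemma_1} and Lemma \ref{lemma_2} give exact equality \(\delta^i_t = \delta^e_t\) whenever \(\kappa_t = 1\), for example in the tabular case, and both errors vanish at a converged critic.

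The main obstacle is that the argument is approximate. It rests on the first-order expansion and on a positive factor that may be near zero or very large, so it would be an advantage estimate only up to scaling. We will therefore state the proposition as viability in the sense of sign- and ranking-preservation and unbiasedness up to a positive state-dependent scale, in the small-step-size limit. We will also note that the \(O(\alpha_t)\) curvature term is exactly the nonlinear discrepancy discussed in Section \ref{nonlinear_fa}.
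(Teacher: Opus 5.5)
Your route is genuinely different from the paper's. The paper's proof is purely interpretive. It says $\delta^i_t$ is the rescaled revision of $V_{\boldsymbol{w}_t}(S_t)$ caused by the transition, a positive revision means ``better than expected'' and a negative one worse, so $\delta^i_t$ is ``directionally consistent'' with the advantage. It never relates $\delta^i_t$ to $\delta^e_t$ or to $A_\pi$, and it asserts its key premise (that the sign of the revision tracks ``better than expected'') rather than deriving it. You instead derive $\delta^i_t = \|\nabla_{\boldsymbol{w}} V_{\boldsymbol{w}_t}(S_t)\|^2\,\delta^e_t + O(\alpha_t)$ for one semi-gradient step, which is the nonlinear analogue of \eqref{eq_tdi_vs_tde_linear_3}. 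You then combine it with $\mathbb{E}[\delta^e_t \mid S_t,A_t] = A_\pi(S_t,A_t)$ under an exact critic. For batch size one and plain SGD, which is exactly Algorithm~\ref{alg_3}, this is sound and more informative than the paper's argument. It proves the paper's premise to first order in $\alpha_t$, whenever $\nabla_{\boldsymbol{w}} V_{\boldsymbol{w}_t}(S_t)\neq 0$. It shows the scale is action-independent because the critic is a state-value function, so per-state action rankings survive in conditional expectation. It also exposes the failure modes: a degenerate or very large gradient norm, and the curvature remainder at finite step size. The paper's version buys brevity and optimizer-agnostic phrasing, at the cost of rigor; its premise is exactly what breaks under batching or momentum.

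Several of your side claims are wrong and should be cut or restricted.

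\textbf{The batch case is not analogous.} The per-sample implicit error is $\delta^i_j \approx \frac{1}{B}\sum_k \nabla V(S_k)^T\nabla V(S_j)\,\delta^e_k$. This mixes in the other samples' explicit errors with kernel weights that can be negative. Indeed, $\delta^i_j$ depends only on $S_j$ and the shared updated parameters, so two samples with the same state receive identical implicit errors whatever their actions. Pointwise sign and ranking preservation therefore fail. The coefficient you take from \eqref{eq_tdi_vs_tde_batch_3} is the weight of $\delta^e_k$ in a batch average, not a scale on a per-sample advantage, and it need not be nonnegative: $\boldsymbol{x}_1=(1,0)$, $\boldsymbol{x}_2=(-2,0)$ gives $-1/2$ for $k=1$. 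Only a conditional-expectation statement survives. Under i.i.d.\ on-policy sampling the cross terms do not depend on $A_j$ and act like a state-dependent baseline, while the own-sample term has scale $\|\nabla V(S_j)\|^2/B$.

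\textbf{The policy-improvement claim fails with shared actor parameters.} You claim a nonnegative inner product with each per-state improvement direction, hence policy improvement. Let $G_s = \sum_a \pi(a\mid s)\nabla\log\pi(a\mid s)A_\pi(s,a)$, and take $G_1=(2,0)$, $G_2=(-1,0.1)$, equal state weights, and $\kappa=(0.1,1)$. The reweighted direction is proportional to $(-0.8,0.1)$, which has negative inner product with $G_1$ and with $G_1+G_2$. The claim holds only for per-state (tabular) policy parameters, and your stated conclusion does not need it.

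\textbf{``Both errors vanish at a converged critic'' contradicts your first step.} With an exact critic, $\delta^e_t$ has conditional mean $A_\pi(S_t,A_t)$, which is generally nonzero. A quantity that vanished there would be useless as an advantage estimate.

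\textbf{Your expansion assumes the plain semi-gradient step.} With Adam, as used in the paper's experiments, the parameter change carries momentum from earlier TD errors. Then $\delta^i_t$ is no longer a function of $\delta^e_t$ alone.
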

\begin{proof}
Consider the A2C implicit TD error, \(\delta^{i}_t \doteq \frac{1}{\alpha_t}\left(V_{\boldsymbol{w}_{t+1}}(S_{t}) - V_{\boldsymbol{w}_t}(S_{t})\right)\), which measures how much the agent's state-value estimate, \(V_{\boldsymbol{w}_t}(S_{t})\), changes as a result of taking action \(A_t\). Since a positive revision indicates that the observed outcome was better than the agent expected from state \(S_t\), and a negative revision indicates the opposite, it follows that the implicit TD error is directionally consistent with, and thus constitutes a viable estimate of, the advantage function.
\end{proof}

\subsection{The Empirical Implications of Using the Explicit vs. Implicit TD Errors}
\label{experiments}
In this section, we perform an empirical evaluation of how using the explicit TD error vs. the implicit TD error affects the performance of deep differential, reward centering, and A2C algorithms. In particular, for each class of algorithm, we compared how the algorithm performs in Atari \citep{Bellemare2013-ef} or MuJoCo \citep{Todorov2012-ku} environments when using the explicit vs. implicit TD error to compute other quantities (i.e., the average-reward or advantage estimates). We tested the differential algorithms in the Atari \emph{Breakout} environment, the reward centering algorithms in the Atari \emph{Pong} environment, and the A2C algorithms in the MuJoCo \emph{HalfCheetah} environment.

Figure \ref{fig_results} shows the results of this comparison. In particular, Figure \ref{fig_results}\textcolor{mylightblue}{a)} shows that for differential algorithms, using the implicit TD error to update the average-reward estimate yields superior performance in the early stages of learning, precisely when the divergence between the two TD errors is largest (see Figure \ref{fig_td_diff}). We can also see (in Figure \ref{fig_results}\textcolor{mylightblue}{a)}) that using the implicit TD error to update the average-reward estimate results in more stable performance than when using the smallest-magnitude explicit TD error average-reward update from \citet{Rojas2026-dv}. In Figure \ref{fig_results}\textcolor{mylightblue}{b)}, we can see that, for value-based reward centering algorithms, using the implicit TD error to update the average-reward estimate yields superior performance. We can also see that, in this case, using the explicit TD error to update the average-reward estimate results in stable performance. This is consistent with the findings from \citet{Naik2024-km}, which suggest that, in some cases, using the explicit TD error to update the average-reward estimate can be a viable option given sufficient step size tuning. Finally, Figure \ref{fig_results}\textcolor{mylightblue}{c)} shows that for A2C algorithms, using the implicit TD error as the advantage estimate yields slightly superior performance than when using the explicit TD error as the advantage estimate. The full set of experimental details and results can be found in Appendix \ref{appendix_experiments}.

\begin{figure}[htbp]
\centerline{\includegraphics[scale=0.53]{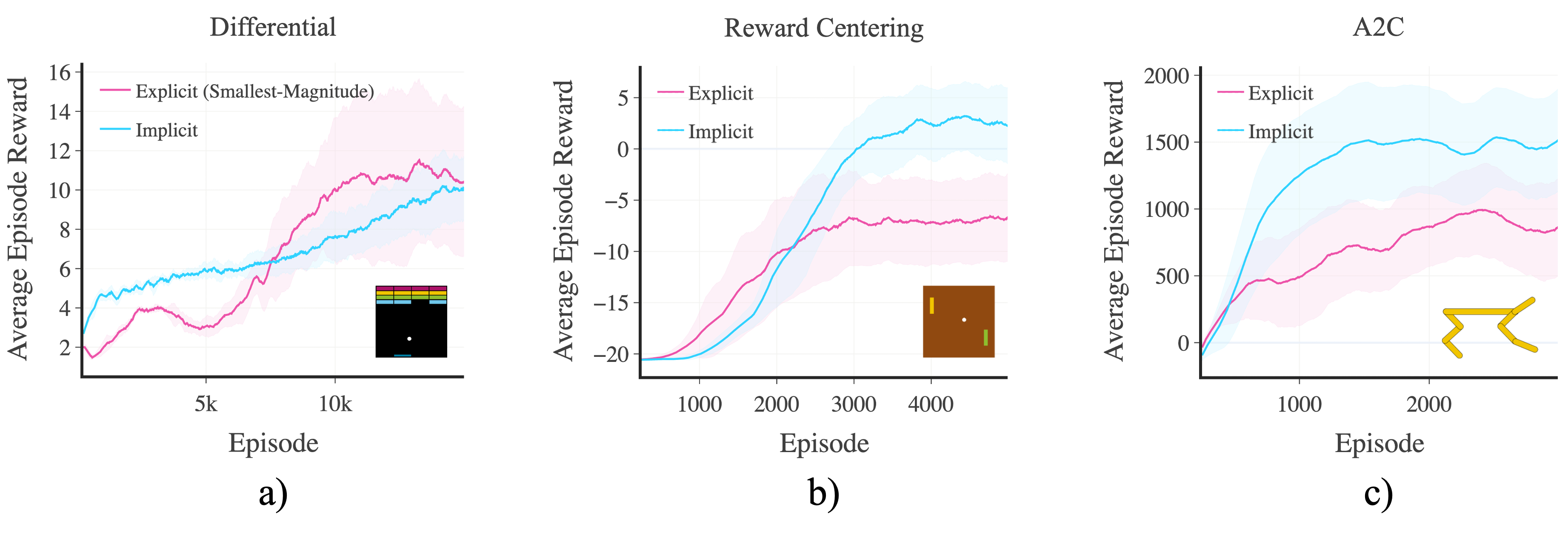}}
\caption{Rolling total reward per episode when using: \textbf{a)} a differential Q-learning algorithm in the Breakout environment, \textbf{b)} a Q-learning algorithm with value-based reward centering in the Pong environment, and \textbf{c)} an A2C algorithm in the HalfCheetah environment. Each plot shows the performance of each algorithm when using the explicit vs. implicit TD error. A solid line denotes the mean total reward per episode, and a shaded region denotes a 95\% confidence interval over 8 runs.}
\label{fig_results}
\end{figure}

\section{Discussion, Limitations, and Future Work}
In this work, we performed a formal exploration and characterization of the differences between two seemingly-equivalent interpretations of the TD error: the TD error as the difference between temporally successive predictions (the \emph{implicit} TD error), and the TD error as the difference between a bootstrapped target and a prediction (the \emph{explicit} TD error). We showed that although these two interpretations are equivalent in tabular settings, increasingly nonlinear deep RL architectures can cause these two interpretations of the TD error to yield increasingly different numerical values. We then leveraged this insight to show how choosing one interpretation over the other can affect the performance of deep RL algorithms that utilize the TD error to compute other quantities. 

Most notably, we showed with differential (i.e., average-reward) algorithms that using the interpretation of the TD error as the difference between a bootstrapped target and a prediction can result in unstable average-reward estimates in deep RL settings. Conversely, we showed how using the interpretation of the TD error as the difference between temporally successive predictions mitigates this instability. From an empirical perspective, we also showed how the specific choice of TD error used in deep RL algorithms can result in more stable and sometimes superior performance, not just with differential algorithms, but also with reward centering \citep{Naik2024-km} and A2C \citep{Mnih2016-tt} algorithms. Altogether, our results show that the default interpretation of the TD error as the difference between a bootstrapped target and a prediction does not always hold in deep RL settings. In other words, \emph{the choice of implicit vs. explicit TD error matters in deep RL settings}.

In terms of limitations and future work, we note that our analytical investigation was limited to showing the existence of a difference between the two TD errors in increasingly nonlinear deep RL settings. Although we formally quantified this difference in linear function approximation settings, quantifying (or bounding) this difference in deep RL settings remains an open research question. In this regard, the type of nonlinear activations used, the number and type of layers used, the size of batch updates, and the type of optimizer used (e.g. stochastic gradient descent vs. Adam \citep{Kingma2015-hm}) are all possible sources of divergence between the two TD errors in deep RL settings. 

More broadly, this work opens the door to several interesting research directions that could be pursued in future work. Namely, now that we have formalized the notion of the implicit TD error in deep RL settings, can we design deep RL algorithms that directly optimize this objective rather than, or alongside, the explicit TD error? Are there other ways in which the implicit TD error can be incorporated into deep RL algorithms to facilitate learning? Indeed, the implications of the implicit TD error in deep RL settings presents a compelling direction for future research, one that suggests that the story of the TD error as it pertains to RL has only just begun.

\bibliography{references}

\begin{thebibliography}{15}
\providecommand{\natexlab}[1]{#1}
\providecommand{\url}[1]{\texttt{#1}}
\expandafter\ifx\csname urlstyle\endcsname\relax
  \providecommand{\doi}[1]{doi: #1}\else
  \providecommand{\doi}{doi: \begingroup \urlstyle{rm}\Url}\fi

\bibitem[Adamczyk et~al.(2025)Adamczyk, Makarenko, Tiomkin, and Kulkarni]{Adamczyk2025-cv}
Jacob Adamczyk, Volodymyr Makarenko, Stas Tiomkin, and Rahul~V Kulkarni.
\newblock Average-reward soft actor-critic.
\newblock \emph{Reinforcement Learning Journal}, 2025.

\bibitem[Bellemare et~al.(2013)Bellemare, Naddaf, Veness, and Bowling]{Bellemare2013-ef}
Marc~G Bellemare, Yavar Naddaf, J~Veness, and Michael Bowling.
\newblock The arcade learning environment: An evaluation platform for general agents.
\newblock \emph{J. Artif. Intell. Res.}, pages 253--279, 2013.

\bibitem[Haarnoja et~al.(2018)Haarnoja, Zhou, Abbeel, and Levine]{Haarnoja2018-xd}
Tuomas Haarnoja, Aurick Zhou, Pieter Abbeel, and Sergey Levine.
\newblock Soft actor-critic: Off-policy maximum entropy deep reinforcement learning with a stochastic actor.
\newblock In \emph{Proceedings of the 35th International Conference on Machine Learning}, 2018.

\bibitem[Kingma and Ba(2015)]{Kingma2015-hm}
Diederik~P Kingma and Jimmy Ba.
\newblock Adam: A method for stochastic optimization.
\newblock In \emph{International Conference on Learning Representations (ICLR)}, 2015.

\bibitem[Mnih et~al.(2015)Mnih, Kavukcuoglu, Silver, Rusu, Veness, Bellemare, Graves, Riedmiller, Fidjeland, Ostrovski, Petersen, Beattie, Sadik, Antonoglou, King, Kumaran, Wierstra, Legg, and Hassabis]{Mnih2015-un}
Volodymyr Mnih, Koray Kavukcuoglu, David Silver, Andrei~A Rusu, Joel Veness, Marc~G Bellemare, Alex Graves, Martin Riedmiller, Andreas~K Fidjeland, Georg Ostrovski, Stig Petersen, Charles Beattie, Amir Sadik, Ioannis Antonoglou, Helen King, Dharshan Kumaran, Daan Wierstra, Shane Legg, and Demis Hassabis.
\newblock Human-level control through deep reinforcement learning.
\newblock \emph{Nature}, 518\penalty0 (7540):\penalty0 529--533, February 2015.

\bibitem[Mnih et~al.(2016)Mnih, Badia, Mirza, Graves, Lillicrap, Harley, Silver, and Kavukcuoglu]{Mnih2016-tt}
Volodymyr Mnih, Adrià~Puigdomènech Badia, Mehdi Mirza, Alex Graves, Timothy~P Lillicrap, Tim Harley, David Silver, and Koray Kavukcuoglu.
\newblock Asynchronous methods for deep reinforcement learning.
\newblock In \emph{Proceedings of the 33rd International Conference on Machine Learning}, 2016.

\bibitem[Naik et~al.(2024)Naik, Wan, Tomar, and Sutton]{Naik2024-km}
Abhishek Naik, Yi~Wan, Manan Tomar, and Richard~S Sutton.
\newblock Reward centering.
\newblock \emph{Reinforcement Learning Journal}, 4:\penalty0 1995–2016, 2024.

\bibitem[Puterman(1994)]{Puterman1994-dq}
Martin~L Puterman.
\newblock \emph{Markov Decision Processes: Discrete Stochastic Dynamic Programming}.
\newblock John Wiley \& Sons, 1994.

\bibitem[Rojas and Lee(2026)]{Rojas2026-dv}
Juan~Sebastian Rojas and Chi-Guhn Lee.
\newblock A differential perspective on distributional reinforcement learning.
\newblock In \emph{Proceedings of the AAAI Conference on Artificial Intelligence}, 2026.

\bibitem[Sutton(1988)]{Sutton1988-vs}
Richard~S Sutton.
\newblock Learning to predict by the methods of temporal differences.
\newblock \emph{Mach. Learn.}, 3:\penalty0 944, 1988.

\bibitem[Sutton and Barto(2018)]{Sutton2018-eh}
Richard~S Sutton and Andrew~G Barto.
\newblock \emph{Reinforcement Learning: An Introduction, {2nd} edition}.
\newblock MIT Press, November 2018.

\bibitem[Tesauro(1992)]{Tesauro1992-ul}
Gerald Tesauro.
\newblock Practical issues in temporal difference learning.
\newblock \emph{Mach. Learn.}, 8\penalty0 (3-4):\penalty0 257--277, May 1992.

\bibitem[Todorov et~al.(2012)Todorov, Erez, and Tassa]{Todorov2012-ku}
Emanuel Todorov, Tom Erez, and Yuval Tassa.
\newblock {MuJoCo}: A physics engine for model-based control.
\newblock In \emph{2012 IEEE/RSJ International Conference on Intelligent Robots and Systems}, pages 5026--5033. IEEE, October 2012.

\bibitem[Wan et~al.(2021)Wan, Naik, and Sutton]{Wan2021-re}
Yi~Wan, Abhishek Naik, and Richard~S Sutton.
\newblock Learning and planning in average-reward markov decision processes.
\newblock In \emph{Proceedings of the 38th International Conference on Machine Learning}, 2021.

\bibitem[Watkins and Dayan(1992)]{Watkins1992-nq}
Christopher~J Watkins and Peter Dayan.
\newblock {Q}-learning.
\newblock \emph{Mach. Learn.}, 8:\penalty0 279--292, 1992.

\end{thebibliography}


\newpage
\appendix
\numberwithin{equation}{section}
\numberwithin{figure}{section}
\numberwithin{theorem}{subsection}

\section{Deep RL Algorithms with Implicit TD Error Updates}
\label{appendix_algs}

In this appendix, we provide the pseudocode for our proposed deep RL algorithms with implicit TD error updates. As is common practice in deep RL settings, we adopt the smooth L1 loss, \(L^{\lambda}(x)\), \citep{Mnih2015-un} as the critic loss in our algorithms:
\begin{equation}
\label{eq_smoothl1}
L^{\lambda}(x) = 
\begin{cases}
\frac{1}{2\lambda}x^2, & \text{if } |x| \leq \lambda \\
|x| - \frac{1}{2}\lambda, & \text{if } |x| > \lambda \text{.}
\end{cases}
\end{equation}

\begin{algorithm}
   \caption{Differential DQN with Implicit TD Error Average-Reward Updates}
   \label{alg_1}
\begin{algorithmic}
    \STATE {\bfseries Input:} a differentiable state-action value function parameterization: \(\hat{q}(s, a, \boldsymbol{w})\) (with target network \(\hat{q}_{_T}(s, a, \boldsymbol{w_{_T}})\)), the policy \(\pi\) to be used (e.g., \(\varepsilon\)-greedy)
    \STATE {\bfseries Algorithm parameters:} step size parameters \(\{\alpha\), \(\eta\}\), loss parameter \(\lambda\)
    \STATE Initialize state-action value weights \(\boldsymbol{w}, \boldsymbol{w_{_T}} \in \mathbb{R}^{d}\) arbitrarily (e.g. to \(\boldsymbol{0}\))    
    \STATE Initialize \(\bar{R}\) arbitrarily (e.g. to zero)
    \STATE Obtain initial \(S\)
    \WHILE{still time to train}
        \STATE \(A \leftarrow\) action given by \(\pi\) for \(S\)
        \STATE Take action \(A\), observe \(R, S'\)
        \STATE Store \((S, A, R, S')\) in replay buffer
        \IF {time to update estimates}
        \STATE Sample a minibatch of \(B\) transitions from replay buffer: \(\{(S_b, A_b, R_b, S_b')\}_{b=1}^{B}\)
        \STATE For each \(b\)-th transition: \(\delta^e_b = R_b - \bar{R} + \max_a \hat{q}_{_T}(S_b', a, \boldsymbol{w_{_T}}) - \hat{q}(S_b, A_b, \boldsymbol{w})\)
        \STATE \(\ell = \frac{1}{B}\sum_{b=1}^{B}L^{\lambda}(\delta^e_b)\) (See Equation \eqref{eq_smoothl1})
        \STATE \(\boldsymbol{w'} = \boldsymbol{w} - \alpha\frac{\partial \ell}{\partial \boldsymbol{w}}\)
        \STATE For each \(b\)-th transition: \(\delta^i_b = \frac{1}{\alpha}\left(\hat{q}(S_b, A_b, \boldsymbol{w'}) - \hat{q}(S_b, A_b, \boldsymbol{w})\right)\)
        \STATE \(\bar{R} = \bar{R} + \eta\alpha\frac{1}{B}\sum_{b=1}^{B}\delta^i_b\)
        \STATE \(\boldsymbol{w} = \boldsymbol{w'}\)
        \STATE Update \(\boldsymbol{w_{_T}}\) as needed (e.g. using Polyak averaging)
        \ENDIF
        \STATE \(S = S'\)
    \ENDWHILE
    \STATE return \(\boldsymbol{w}\)
\end{algorithmic}
\end{algorithm}

\begin{algorithm}
   \caption{Value-Based Reward Centering DQN with Implicit TD Error Average-Reward Updates}
   \label{alg_2}
\begin{algorithmic}
    \STATE {\bfseries Input:} a differentiable state-action value function parameterization: \(\hat{q}(s, a, \boldsymbol{w})\) (with target network \(\hat{q}_{_T}(s, a, \boldsymbol{w_{_T}})\)), the policy \(\pi\) to be used (e.g., \(\varepsilon\)-greedy)
    \STATE {\bfseries Algorithm parameters:} discount factor \(\gamma\), step size parameters \(\{\alpha\), \(\eta\}\), loss parameter \(\lambda\)
    \STATE Initialize state-action value weights \(\boldsymbol{w}, \boldsymbol{w_{_T}} \in \mathbb{R}^{d}\) arbitrarily (e.g. to \(\boldsymbol{0}\))    
    \STATE Initialize \(\bar{R}\) arbitrarily (e.g. to zero)
    \STATE Obtain initial \(S\)
    \WHILE{still time to train}
        \STATE \(A \leftarrow\) action given by \(\pi\) for \(S\)
        \STATE Take action \(A\), observe \(R, S'\)
        \STATE Store \((S, A, R, S')\) in replay buffer
        \IF {time to update estimates}
        \STATE Sample a minibatch of \(B\) transitions from replay buffer: \(\{(S_b, A_b, R_b, S_b')\}_{b=1}^{B}\)
        \STATE For each \(b\)-th transition: \(\delta^e_b = R_b - \bar{R} + \gamma \max_a \hat{q}_{_T}(S_b', a, \boldsymbol{w_{_T}}) - \hat{q}(S_b, A_b, \boldsymbol{w})\)
        \STATE \(\ell = \frac{1}{B}\sum_{b=1}^{B}L^{\lambda}(\delta^e_b)\) (See Equation \eqref{eq_smoothl1})
        \STATE \(\boldsymbol{w'} = \boldsymbol{w} - \alpha\frac{\partial \ell}{\partial \boldsymbol{w}}\)
        \STATE For each \(b\)-th transition: \(\delta^i_b = \frac{1}{\alpha}\left(\hat{q}(S_b, A_b, \boldsymbol{w'}) - \hat{q}(S_b, A_b, \boldsymbol{w})\right)\)
        \STATE \(\bar{R} = \bar{R} + \eta\alpha\frac{1}{B}\sum_{b=1}^{B}\delta^i_b\)
        \STATE \(\boldsymbol{w} = \boldsymbol{w'}\)
        \STATE Update \(\boldsymbol{w_{_T}}\) as needed (e.g. using Polyak averaging)
        \ENDIF
        \STATE \(S = S'\)
    \ENDWHILE
    \STATE return \(\boldsymbol{w}\)
\end{algorithmic}
\end{algorithm}

\begin{algorithm}
   \caption{A2C with Implicit TD Error Advantage Estimate}
   \label{alg_3}
\begin{algorithmic}
    \STATE {\bfseries Input:} a differentiable state-value function parameterization \(\hat{v}(s, \boldsymbol{w})\), a differentiable policy parameterization \(\pi(a \mid s, \boldsymbol{u})\)
    \STATE {\bfseries Algorithm parameters:} discount factor \(\gamma\), step size parameters \(\{\alpha\), \(\eta\}\)
    \STATE Initialize state-value weights \(\boldsymbol{w} \in \mathbb{R}^{d}\) and policy weights \(\boldsymbol{u} \in \mathbb{R}^{d'}\) (e.g. to \(\boldsymbol{0}\))
    \STATE Obtain initial \(S\)
    \WHILE{still time to train}
        \STATE \(A \sim \pi(\cdot \mid S, \boldsymbol{u})\)
        \STATE Take action \(A\), observe \(R, S'\)
        \STATE \(\delta^e = R + \gamma \hat{v}(S', \boldsymbol{w}) - \hat{v}(S, \boldsymbol{w})\)
        \STATE \(\boldsymbol{w'} = \boldsymbol{w} + \alpha\delta^e\nabla\hat{v}(S, \boldsymbol{w})\)
        \STATE \(\delta^i = \frac{1}{\alpha}\left(\hat{v}(S, \boldsymbol{w'}) - \hat{v}(S, \boldsymbol{w})\right)\)
        \STATE \(\boldsymbol{u} = \boldsymbol{u} + \eta\alpha\delta^i\nabla \ln \pi(A \mid S, \boldsymbol{u})\)
        \STATE \(\boldsymbol{w} = \boldsymbol{w'}\)
        \STATE \(S = S'\)
    \ENDWHILE
    \STATE return \(\boldsymbol{w}\) and \(\boldsymbol{u}\)
\end{algorithmic}
\end{algorithm}

\newpage

\section{Numerical Experiments}
\label{appendix_experiments}
This appendix contains details regarding the numerical experiments performed as part of this work. 

We ran three groups of experiments. In the first group of experiments (Section \ref{exp_diff}), we aimed to show, from an empirical perspective, how the explicit and implicit TD errors can yield increasingly different numerical values in increasingly nonlinear deep RL settings. In the second group of experiments (Section \ref{exp_estimates}), we aimed to show, from an empirical perspective, how the choice of explicit vs. implicit TD error for the average-reward update (i.e., Equation \eqref{eq_diff_q_learning_tabular_4}) in deep differential (i.e., average-reward) algorithms can have a profound effect on the stability of the average-reward estimate. In the third group of experiments (Section \ref{exp_performance}), we aimed to show, from an empirical perspective, how using the explicit TD error vs. the implicit TD error affects the performance of deep differential, reward centering \citep{Naik2024-km}, and A2C \citep{Mnih2016-tt} algorithms.

\subsection{TD Error Difference Experiments}
\label{exp_diff}
In this section, we discuss the TD error difference experiments performed as part of this work. Through these experiments, we aimed to show, from an empirical perspective, how the explicit and implicit TD errors can yield increasingly different numerical values in increasingly nonlinear deep RL settings.

To this end, we compared the explicit and implicit TD errors when applying Q-learning algorithms on two RL tasks. In the first task, we used a Q-learning algorithm on the well-known \emph{inverted pendulum} environment, where an agent learns how to optimally balance an inverted pendulum. Then, in the second task, we used a DQN algorithm \citep{Mnih2015-un} on the Atari \emph{Breakout} environment \citep{Bellemare2013-ef}. Both environments are displayed in Figure \ref{fig_experiments_diff}. In both experiments, we compared the two TD errors when introducing increased nonlinearity into the function approximator. 
\begin{figure}[htbp]
\centerline{\includegraphics[scale=0.6]{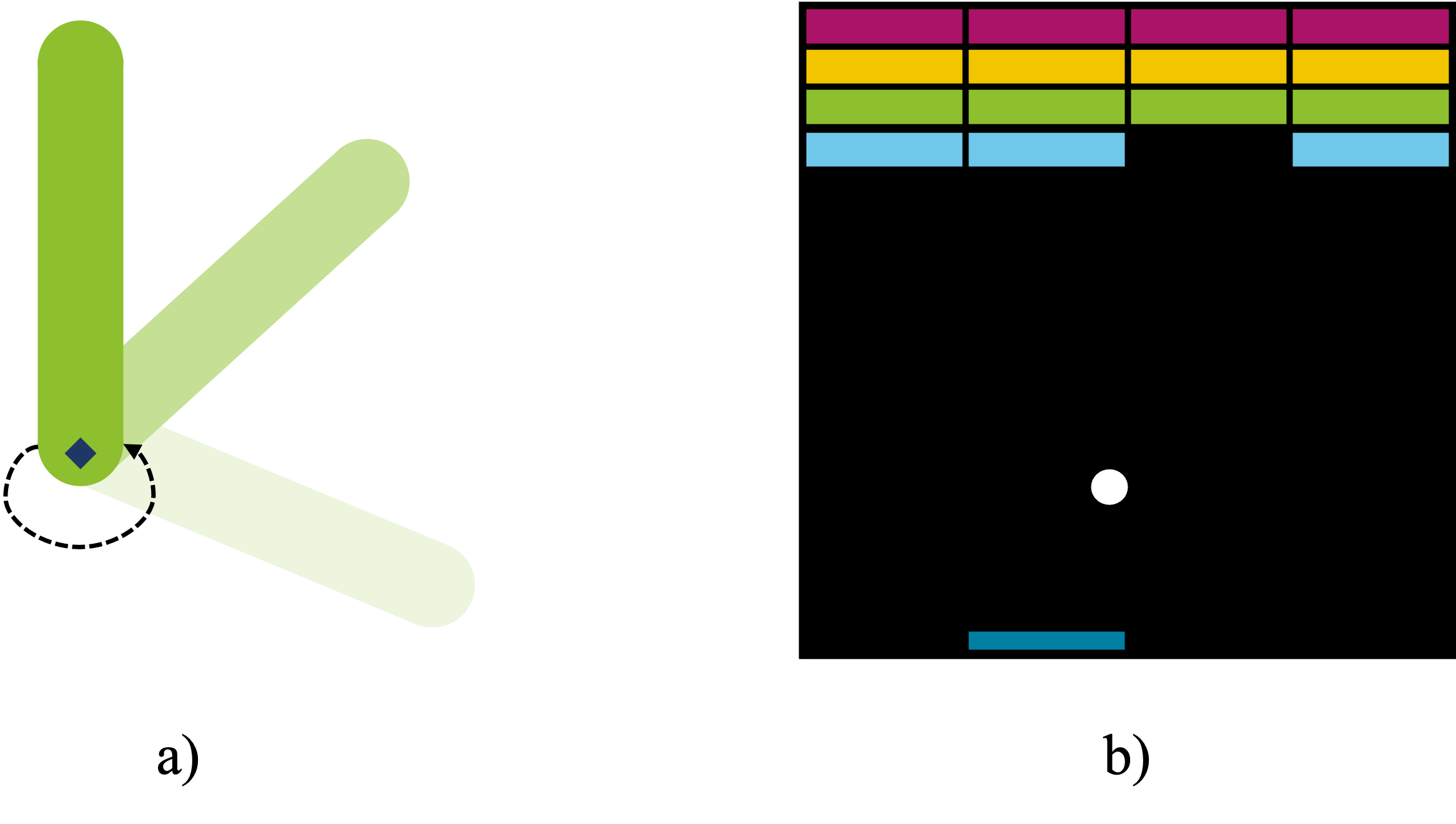}}
\caption{An illustration of the \textbf{a)} inverted pendulum, and \textbf{b)} Atari Breakout environments.}
\label{fig_experiments_diff}
\end{figure}

In the inverted pendulum experiment, we compared the two TD errors when using a linear function approximator (with no batch updates) vs. a function approximator with two hidden layers and batch updates. For the linear function approximator, we used tile coding \citep{Sutton2018-eh} with 32 tilings, each with 8 X 8 tiles. For the nonlinear function approximator, we used the following network
with \textit{hidden\_dim} = 32:

\begin{lstlisting}[language=Python, numbers=none]
import torch

class ValueNetwork(torch.nn.Module):
    def __init__(self, state_dim, action_dim, hidden_dim):
        super(ValueNetwork, self).__init__()
        self.linear_1 = torch.nn.Linear(state_dim, hidden_dim)
        self.linear_2 = torch.nn.Linear(hidden_dim, hidden_dim)
        self.linear_3 = torch.nn.Linear(hidden_dim, action_dim)


    def forward(self, x):
        x = torch.nn.functional.relu(self.linear_1(x))
        x = torch.nn.functional.relu(self.linear_2(x))
        return self.linear_3(x)
\end{lstlisting}

For this nonlinear function approximator, we used a replay buffer with a minimum size of 1e2, a maximum size of 1e5, and a sample batch size of 32. We used a target network, and updated it using Polyak averaging with a fixed step size of 0.005. For both approximators, we used stochastic gradient descent updates on the standard mean square value loss (i.e., Equation \eqref{eq_dqn_1}), a value function step size of 2e-4, a discount factor of 0.99, and an \(\varepsilon\)-greedy policy with \(\varepsilon=\) 0.1.

In the Breakout experiment, we compared the two TD errors when using the following ``small'' and ``large'' networks, both of which consist of three convolutional layers and two fully connected layers:

``Small'' network:
\begin{lstlisting}[language=Python, numbers=none]
import torch

class ValueNetwork(torch.nn.Module):
    def __init__(self, state_dim, action_dim):
        super(ValueNetwork, self).__init__()
        self.conv1 = torch.nn.Conv2d(state_dim, 16, kernel_size=8, stride=4)
        self.conv2 = torch.nn.Conv2d(16, 32, kernel_size=4, stride=2)
        self.conv3 = torch.nn.Conv2d(32, 32, kernel_size=3, stride=1)
        self.fc1 = torch.nn.Linear(32 * 7 * 7, 256)
        self.fc2 = torch.nn.Linear(256, action_dim)
        
    def forward(self, x):
        x = torch.nn.functional.relu(self.conv1(x))
        x = torch.nn.functional.relu(self.conv2(x))
        x = torch.nn.functional.relu(self.conv3(x))
        x = x.view(x.size(0), -1)
        x = torch.nn.functional.relu(self.fc1(x))
        return self.fc2(x)
\end{lstlisting}

``Large'' network:
\begin{lstlisting}[language=Python, numbers=none]
import torch

class ValueNetwork(torch.nn.Module):
    def __init__(self, state_dim, action_dim):
        super(ValueNetwork, self).__init__()
        self.conv1 = torch.nn.Conv2d(state_dim, 64, kernel_size=8, stride=4)
        self.conv2 = torch.nn.Conv2d(64, 128, kernel_size=4, stride=2)
        self.conv3 = torch.nn.Conv2d(128, 128, kernel_size=3, stride=1)
        self.fc1 = torch.nn.Linear(128 * 7 * 7, 1024)
        self.fc2 = torch.nn.Linear(1024, action_dim)

    def forward(self, x):
        x = torch.nn.functional.relu(self.conv1(x))
        x = torch.nn.functional.relu(self.conv2(x))
        x = torch.nn.functional.relu(self.conv3(x))
        x = x.view(x.size(0), -1)
        x = torch.nn.functional.relu(self.fc1(x))
        return self.fc2(x)
\end{lstlisting}

These networks vary by the number of learnable parameters, such that the ``small'' network consists of 0.4M parameters, and the ``large'' network consists of 6.7M parameters. The calculation of the number of parameters is shown for both networks in Tables \ref{table_1} and \ref{table_2}, respectively:

\begin{table}[h]
\centering
\begin{tabular}{|c|c|c|c|c|}
\hline
\textbf{Layer} & \textbf{Input Shape} & \textbf{Output Shape} & \textbf{Calculation Formula} & \textbf{Parameters} \\
\hline
conv1 & \(4 \times 84 \times 84\) & \(16 \times 20 \times 20\) & \((16 \times 4 \times 8 \times 8) + 16\) & 4,112 \\
\hline
conv2 & \(16 \times 20 \times 20\) & \(32 \times 9 \times 9\) & \((32 \times 16 \times 4 \times 4) + 32\) & 8,224 \\
\hline
conv3 & \(32 \times 9 \times 9\) & \(32 \times 7 \times 7\) & \((32 \times 32 \times 3 \times 3) + 32\) & 9,248 \\
\hline
fc1 & 1,568 & 256 & \((256 \times 1,568) + 256\) & 401,664 \\
\hline
fc2 & 256 & 4 & \((4 \times 256) + 4\) & 1,028 \\
\hline
\textbf{Total} & & & & \textbf{424,276} \\
\hline
\end{tabular}
\caption{Network Parameter Calculation for ``Small'' Network}
\label{table_1}
\end{table}

\begin{table}[h]
\centering
\begin{tabular}{|c|c|c|c|c|}
\hline
\textbf{Layer} & \textbf{Input Shape} & \textbf{Output Shape} & \textbf{Calculation Formula} & \textbf{Parameters} \\
\hline
conv1 & \(4 \times 84 \times 84\) & \(64 \times 20 \times 20\) & \((64 \times 4 \times 8 \times 8) + 64\) & 16,448 \\
\hline
conv2 & \(64 \times 20 \times 20\) & \(128 \times 9 \times 9\) & \((128 \times 64 \times 4 \times 4) + 128\) & 131,200 \\
\hline
conv3 & \(128 \times 9 \times 9\) & \(128 \times 7 \times 7\) & \((128 \times 128 \times 3 \times 3) + 128\) & 147,584 \\
\hline
fc1 & 6,272 & 1,024 & \((1,024 \times 6,272) + 1,024\) & 6,423,552 \\
\hline
fc2 & 1,024 & 4 & \((4 \times 1,024) + 4\) & 4,100 \\
\hline
\textbf{Total} & & & & \textbf{6,722,884} \\
\hline
\end{tabular}
\caption{Network Parameter Calculation for ``Large'' Network}
\label{table_2}
\end{table}

For both networks, we used the Adam optimizer \citep{Kingma2015-hm} with the default hyperparameters on the smooth L1 loss (Equation \eqref{eq_smoothl1}) with a \(\lambda\) of 1.0. We used a replay buffer with a minimum size of 1e2, a maximum size of 1e5, and a sample batch size of 32. We used a target network, and updated it using Polyak averaging with a fixed step size of 0.005. We used a value function step size of 2e-4, a discount factor of 0.99, and an \(\varepsilon\)-greedy policy with \(\varepsilon=\) 0.1.

\subsection{Average-Reward Estimate Experiments}
\label{exp_estimates}
In this section, we discuss the average-reward estimate experiments performed as part of this work. Through these experiments, we aimed to show, from an empirical perspective, how the choice of explicit vs. implicit TD error for the average-reward update in deep differential (i.e., average-reward) algorithms can have a profound effect on the stability of the average-reward estimate.

To this end, we compared a deep differential Q-learning algorithm that utilizes \emph{implicit} TD error average-reward updates (Algorithm \ref{alg_1}) against a deep differential Q-learning algorithm that utilizes \emph{explicit} TD error average-reward updates (Algorithm \ref{alg_4}). 

For these experiments, we used the following network architecture:

\begin{lstlisting}[language=Python, numbers=none]
import torch

class ValueNetwork(torch.nn.Module):
    def __init__(self, state_dim, action_dim):
        super(ValueNetwork, self).__init__()
        self.conv1 = torch.nn.Conv2d(state_dim, 32, kernel_size=8, stride=4)
        self.conv2 = torch.nn.Conv2d(32, 64, kernel_size=4, stride=2)
        self.conv3 = torch.nn.Conv2d(64, 64, kernel_size=3, stride=1)
        self.fc1 = torch.nn.Linear(64 * 7 * 7, 512)
        self.fc2 = torch.nn.Linear(512, action_dim)

    def forward(self, x):
        x = torch.nn.functional.relu(self.conv1(x))
        x = torch.nn.functional.relu(self.conv2(x))
        x = torch.nn.functional.relu(self.conv3(x))
        x = x.view(x.size(0), -1)
        x = torch.nn.functional.relu(self.fc1(x))
        return self.fc2(x)
\end{lstlisting}

For both algorithms, we used the Adam optimizer with the default hyperparameters on the smooth L1 loss (Equation \eqref{eq_smoothl1}) with a \(\lambda\) of 1.0. We used a replay buffer with a minimum size of 1e2, a maximum size of 1e5, and a sample batch size of 32. We used a target network, and updated it using Polyak averaging with a fixed step size of 0.005. We used a value function step size, \(\alpha\), of 2e-5, an average-reward step size, \(\eta\alpha\), with an \(\eta\) of 1.0, and an \(\varepsilon\)-greedy policy with \(\varepsilon=\) 0.1. We ran the experiment for different initial average-reward estimate guesses: \(\bar{R}_0 \in \{\text{-0.25, 0.0, 0.25}\}\).

As shown in Figure \ref{fig_estimates}, utilizing the explicit TD error to update the average-reward estimate can cause the average-reward estimate to diverge. Conversely, the figure shows that using the implicit TD error mitigates this instability. To complement these results, Figure \ref{fig_estimates_results} shows the effect that this instability has on the overall performance of the algorithm. As shown in the figure, the instability caused by the explicit TD error average-reward updates results in significantly lower performance than when using implicit TD error average-reward updates.
\begin{figure}[htbp]
\centerline{\includegraphics[scale=0.53]{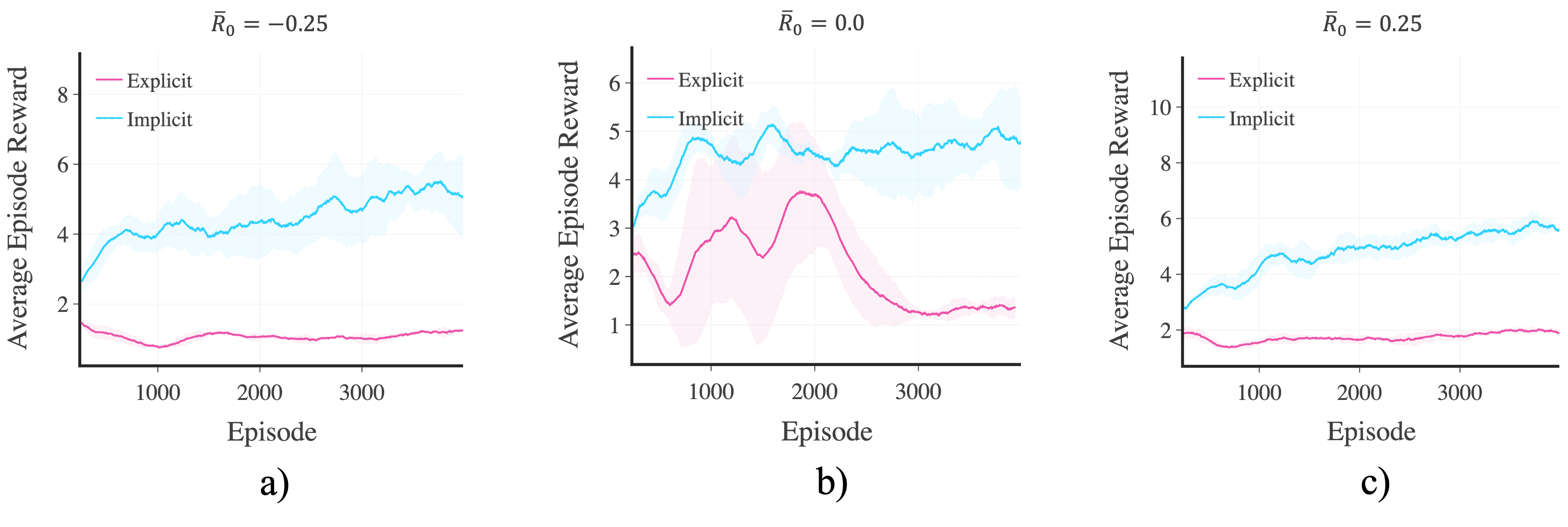}}
\caption{Rolling total reward per episode when using a deep differential Q-learning algorithm in the Breakout environment with an initial average-reward guess of: \textbf{a)} -0.25, \textbf{b)} 0.0, and \textbf{c)} 0.25. Each plot shows the performance when using (batch-averaged) explicit vs. implicit TD error average-reward updates. A solid line denotes the mean total reward per episode, and a shaded region denotes a 95\% confidence interval over 4 runs.}
\label{fig_estimates_results}
\end{figure}

\subsection{Performance Experiments}
\label{exp_performance}
In this section, we discuss the performance experiments performed as part of this work. Through these experiments, we aimed to show, from an empirical perspective, how using the explicit TD error vs. the implicit TD error affects the performance of deep differential, reward centering \citep{Naik2024-km}, and A2C \citep{Mnih2016-tt} algorithms. 

\begin{figure}[htbp]
\centerline{\includegraphics[scale=0.53]{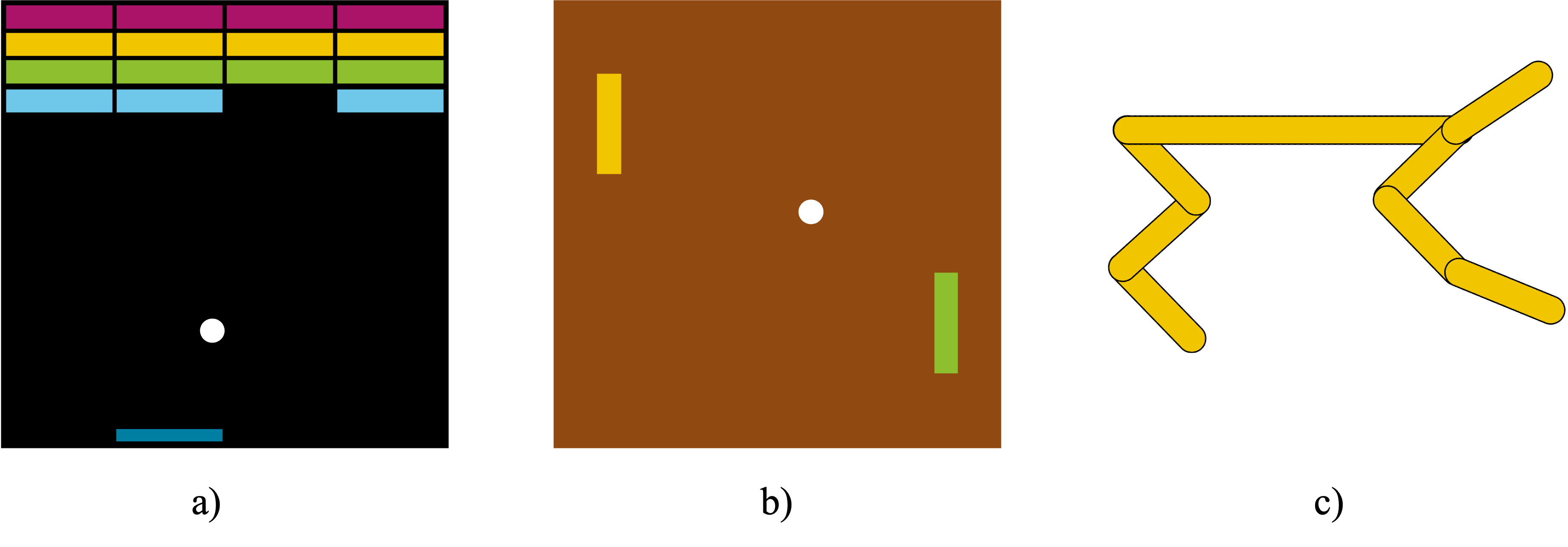}}
\caption{Illustration of the \textbf{a)} Atari Breakout, \textbf{b)} Atari Pong, and \textbf{c)} MuJoCo HalfCheetah tasks.}
\label{fig_experiments}
\end{figure}

In particular, for each class of algorithm, we compared how the algorithm performs in Atari \citep{Bellemare2013-ef} or MuJoCo \citep{Todorov2012-ku} environments when using the explicit vs. implicit TD error to compute other quantities (i.e., the average-reward or advantage estimates). We tested the differential algorithms in the Atari \emph{Breakout} environment, the reward centering algorithms in the Atari \emph{Pong} environment, and the A2C algorithms in the MuJoCo \emph{HalfCheetah} environment. These tasks are shown in Figure \ref{fig_experiments}.\\

\textbf{Breakout Experiment:}

In the Breakout experiment, we compared a deep differential Q-learning algorithm that utilizes implicit TD error average-reward updates (Algorithm \ref{alg_1}) against the deep differential Q-learning algorithm from \citet{Rojas2026-dv} that utilizes smallest-magnitude explicit TD error average-reward updates (Algorithm \ref{alg_5}).

For both algorithms, we used the Adam optimizer with the default hyperparameters on the smooth L1 loss (Equation \eqref{eq_smoothl1}) with a \(\lambda\) of 1.0. We used a replay buffer with a minimum size of 1e2, a maximum size of 1e5, and a sample batch size of 32. We used a target network, and updated it using Polyak averaging with a fixed step size of 0.005. We used an \(\varepsilon\)-greedy policy with \(\varepsilon=\) 0.1. We used an initial average-reward estimate guess of 0.25. We used the value network shown in Section \ref{exp_estimates}.

For the differential algorithm with implicit TD error average-reward updates, we tested every combination of the value function step size, \(\alpha\in\{\text{2e-6, 2e-5, 2e-4, 2e-3}\}\), with the average-reward step size, \(\eta\alpha\), where \(\eta\in\{\text{1e-2, 1e-1, 1.0, 10.0, 100.0}\}\), for a total of 20 unique combinations. Each combination was run 3 times using different random seeds for 1M steps each, and the results were averaged across the runs. A value function step size of 2e-5 and an average-reward \(\eta\) of 1.0 yielded the best results and were used to generate the results displayed in Figure \ref{fig_results}\textcolor{mylightblue}{a)}.

For the differential algorithm with smallest-magnitude explicit TD error average-reward updates, we tested every combination of the value function step size, \(\alpha\in\{\text{2e-6, 2e-5, 2e-4, 2e-3}\}\), with the average-reward step size, \(\eta\alpha\), where \(\eta\in\{\text{1e-2, 1e-1, 1.0, 10.0, 100.0}\}\), for a total of 20 unique combinations. Each combination was run 3 times using different random seeds for 1M steps each, and the results were averaged across the runs. A value function step size of 2e-5 and an average-reward \(\eta\) of 10.0 yielded the best results and were used to generate the results in Figure \ref{fig_results}\textcolor{mylightblue}{a)}.\\

\textbf{Pong Experiment:}

In the Pong experiment, we compared a value-based reward centering algorithm that utilizes implicit TD error average-reward updates (Algorithm \ref{alg_2}) against a value-based reward centering algorithm that utilizes explicit TD error average-reward updates (Algorithm \ref{alg_6}).

For both algorithms, we used the Adam optimizer with the default hyperparameters on the smooth L1 loss (Equation \eqref{eq_smoothl1}) with a \(\lambda\) of 1.0. We used a replay buffer with a minimum size of 1e2, a maximum size of 1e5, and a sample batch size of 32. We used a target network, and updated it using Polyak averaging with a fixed step size of 0.005. We used an \(\varepsilon\)-greedy policy with \(\varepsilon=\) 0.1. We used an initial average-reward estimate guess of 0.25. We used a discount factor of 0.99. We used the value network shown in Section \ref{exp_estimates}.

For the reward centering algorithm with implicit TD error average-reward updates, we tested every combination of the value function step size, \(\alpha\in\{\text{2e-6, 2e-5, 2e-4, 2e-3}\}\), with the average-reward step size, \(\eta\alpha\), where \(\eta\in\{\text{1e-2, 1e-1, 1.0}\), \(\text{10.0, 100.0}\}\), for a total of 20 unique combinations. Each combination was run 3 times using different random seeds for 1M steps each, and the results were averaged across the runs. A value function step size of 2e-5 and an average-reward \(\eta\) of 1e-2 yielded the best results and were used to generate the results displayed in Figure \ref{fig_results}\textcolor{mylightblue}{b)}.

For the reward centering algorithm with explicit TD error average-reward updates, we tested every combination of the value function step size, \(\alpha\in\{\text{2e-6, 2e-5, 2e-4, 2e-3}\}\), with the average-reward step size, \(\eta\alpha\), where \(\eta\in\{\text{1e-2, 1e-1, 1.0}\), \(\text{10.0, 100.0}\}\), for a total of 20 unique combinations. Each combination was run 3 times using different random seeds for 1M steps each, and the results were averaged across the runs. A value function step size of 2e-4 and an average-reward \(\eta\) of 1e-2 yielded the best results and were used to generate the results displayed in Figure \ref{fig_results}\textcolor{mylightblue}{b)}.\\

\textbf{HalfCheetah Experiment:}

In the HalfCheetah experiment, we compared an A2C algorithm that utilizes implicit TD error advantage estimates (Algorithm \ref{alg_3}) vs. an A2C algorithm that utilizes explicit TD error advantage estimates (Algorithm \ref{alg_7}).

\newpage

For both algorithms, we used the Adam optimizer with the default hyperparameters. We used a parameterized value function with the smooth L1 loss (Equation \eqref{eq_smoothl1} with a \(\lambda\) of 1.0). We used a parameterized softmax policy. We used a replay buffer with a minimum size of 1e2, a maximum size of 1e5, and a sample batch size of 32. We used a target network, and updated it using Polyak averaging with a fixed step size of 0.005. We used a discount factor of 0.99. We used the following critic (i.e., value) and actor (i.e., policy) networks with \textit{hidden\_dim} = 256:

Critic network:
\begin{lstlisting}[language=Python, numbers=none]
import torch

class ValueNetwork(torch.nn.Module):
    def __init__(self, state_dim, action_dim, hidden_dim):
        super(ValueNetwork, self).__init__()
        self.linear1 = torch.nn.Linear(state_dim, hidden_dim)
        self.linear2 = torch.nn.Linear(hidden_dim, hidden_dim)
        self.linear3 = torch.nn.Linear(hidden_dim, action_dim)

    def forward(self, x):
        x = torch.nn.functional.relu(self.linear1(x))
        x = torch.nn.functional.relu(self.linear2(x))
        return self.linear3(x)
\end{lstlisting}

Actor network:
\begin{lstlisting}[language=Python, numbers=none]
import torch

class PolicyNetwork(torch.nn.Module):
    def __init__(self, state_dim, action_dim, hidden_dim, max_action, 
                 log_std_min=-20, log_std_max=2):
        super(PolicyNetwork, self).__init__()

        self.log_std_min = log_std_min
        self.log_std_max = log_std_max

        self.linear1 = torch.nn.Linear(state_dim, hidden_dim)
        self.linear2 = torch.nn.Linear(hidden_dim, hidden_dim)

        self.mean = torch.nn.Linear(hidden_dim, action_dim)
        self.log_std = torch.nn.Linear(hidden_dim, action_dim)

        self.max_action = max_action

    def forward(self, state):
        a = torch.nn.functional.relu(self.linear1(state))
        a = torch.nn.functional.relu(self.linear2(a))
        mean = self.mean(a)
        log_std = self.log_std(a)
        log_std = torch.clamp(log_std, self.log_std_min, self.log_std_max)
        return mean, log_std.exp()

    def sample_action(self, state):
        mean, std = self.forward(state)
        normal = torch.distributions.Normal(mean, std)
        x_t = normal.rsample()
        action = torch.tanh(x_t)

        return action * self.max_action
\end{lstlisting}

For the A2C algorithm with implicit TD error advantage estimates, we tested every combination of the value function step size, \(\alpha\in\{\text{2e-6, 2e-5, 2e-4, 2e-3}\}\), with the policy step size, \(\eta\alpha\), where \(\eta\in\{\text{1e-2, 1e-1, 1.0, 10.0, 100.0}\}\), for a total of 20 unique combinations. Each combination was run 3 times using different random seeds for 1M steps each, and the results were averaged across the runs. A value function step size of 2e-4 and a policy \(\eta\) of 1e-2 yielded the best results and were used to generate the results displayed in Figure \ref{fig_results}\textcolor{mylightblue}{c)}.

For the A2C algorithm with explicit TD error advantage estimates, we tested every combination of the value function step size, \(\alpha\in\{\text{2e-6, 2e-5, 2e-4, 2e-3}\}\), with the policy step size, \(\eta\alpha\), where \(\eta\in\{\text{1e-2, 1e-1, 1.0, 10.0, 100.0}\}\), for a total of 20 unique combinations. Each combination was run 3 times using different random seeds for 1M steps each, and the results were averaged across the runs. A value function step size of 2e-4 and a policy \(\eta\) of 1e-2 yielded the best results and were used to generate the results displayed in Figure \ref{fig_results}\textcolor{mylightblue}{c)}.

\subsection{Baselines}
\label{baseline_algorithms}
Below is the pseudocode for the baseline algorithms used for comparison in our experiments:

\begin{algorithm}
   \caption{Differential DQN}
   \label{alg_4}
\begin{algorithmic}
    \STATE {\bfseries Input:} a differentiable state-action value function parameterization: \(\hat{q}(s, a, \boldsymbol{w})\) (with target network \(\hat{q}_{_T}(s, a, \boldsymbol{w_{_T}})\)), the policy \(\pi\) to be used (e.g., \(\varepsilon\)-greedy)
    \STATE {\bfseries Algorithm parameters:} step size parameters \(\{\alpha\), \(\eta\}\), loss parameter \(\lambda\)
    \STATE Initialize state-action value weights \(\boldsymbol{w}, \boldsymbol{w_{_T}} \in \mathbb{R}^{d}\) arbitrarily (e.g. to \(\boldsymbol{0}\))    
    \STATE Initialize \(\bar{R}\) arbitrarily (e.g. to zero)
    \STATE Obtain initial \(S\)
    \WHILE{still time to train}
        \STATE \(A \leftarrow\) action given by \(\pi\) for \(S\)
        \STATE Take action \(A\), observe \(R, S'\)
        \STATE Store \((S, A, R, S')\) in replay buffer
        \IF {time to update estimates}
        \STATE Sample a minibatch of \(B\) transitions from replay buffer: \(\{(S_b, A_b, R_b, S_b')\}_{b=1}^{B}\)
        \STATE For each \(b\)-th transition: \(\delta^e_b = R_b - \bar{R} + \max_a \hat{q}_{_T}(S_b', a, \boldsymbol{w_{_T}}) - \hat{q}(S_b, A_b, \boldsymbol{w})\)
        \STATE \(\ell = \frac{1}{B}\sum_{b=1}^{B}L^{\lambda}(\delta^e_b)\) (See Equation \eqref{eq_smoothl1})
        \STATE \(\boldsymbol{w} = \boldsymbol{w} - \alpha\frac{\partial \ell}{\partial \boldsymbol{w}}\)
        \STATE \(\bar{R} = \bar{R} + \eta\alpha\frac{1}{B}\sum_{b=1}^{B}\delta^e_b\)
        \STATE Update \(\boldsymbol{w_{_T}}\) as needed (e.g. using Polyak averaging)
        \ENDIF
        \STATE \(S = S'\)
    \ENDWHILE
    \STATE return \(\boldsymbol{w}\)
\end{algorithmic}
\end{algorithm}

\begin{algorithm}
   \caption{A2C \citep{Mnih2016-tt}}
   \label{alg_7}
\begin{algorithmic}
    \STATE {\bfseries Input:} a differentiable state-value function parameterization \(\hat{v}(s, \boldsymbol{w})\), a differentiable policy parameterization \(\pi(a \mid s, \boldsymbol{u})\)
    \STATE {\bfseries Algorithm parameters:} discount factor \(\gamma\), step size parameters \(\{\alpha\), \(\eta\}\)
    \STATE Initialize state-value weights \(\boldsymbol{w} \in \mathbb{R}^{d}\) and policy weights \(\boldsymbol{u} \in \mathbb{R}^{d'}\) (e.g. to \(\boldsymbol{0}\))
    \STATE Obtain initial \(S\)
    \WHILE{still time to train}
        \STATE \(A \sim \pi(\cdot \mid S, \boldsymbol{u})\)
        \STATE Take action \(A\), observe \(R, S'\)
        \STATE \(\delta^e = R + \gamma \hat{v}(S', \boldsymbol{w}) - \hat{v}(S, \boldsymbol{w})\)
        \STATE \(\boldsymbol{w} = \boldsymbol{w} + \alpha\delta^e\nabla\hat{v}(S, \boldsymbol{w})\)
        \STATE \(\boldsymbol{u} = \boldsymbol{u} + \eta\alpha\delta^e\nabla \ln \pi(A \mid S, \boldsymbol{u})\)
        \STATE \(S = S'\)
    \ENDWHILE
    \STATE return \(\boldsymbol{w}\) and \(\boldsymbol{u}\)
\end{algorithmic}
\end{algorithm}

\begin{algorithm}
   \caption{Differential DQN (Smallest-Magnitude Baseline from \citet{Rojas2026-dv})}
   \label{alg_5}
\begin{algorithmic}
    \STATE {\bfseries Input:} a differentiable state-action value function parameterization: \(\hat{q}(s, a, \boldsymbol{w})\) (with target network \(\hat{q}_{_T}(s, a, \boldsymbol{w_{_T}})\)), the policy \(\pi\) to be used (e.g., \(\varepsilon\)-greedy)
    \STATE {\bfseries Algorithm parameters:} step size parameters \(\{\alpha\), \(\eta\}\), loss parameter \(\lambda\)
    \STATE Initialize state-action value weights \(\boldsymbol{w}, \boldsymbol{w_{_T}} \in \mathbb{R}^{d}\) arbitrarily (e.g. to \(\boldsymbol{0}\))    
    \STATE Initialize \(\bar{R}\) arbitrarily (e.g. to zero)
    \STATE Obtain initial \(S\)
    \WHILE{still time to train}
        \STATE \(A \leftarrow\) action given by \(\pi\) for \(S\)
        \STATE Take action \(A\), observe \(R, S'\)
        \STATE Store \((S, A, R, S')\) in replay buffer
        \IF {time to update estimates}
        \STATE Sample a minibatch of \(B\) transitions from replay buffer: \(\{(S_b, A_b, R_b, S_b')\}_{b=1}^{B}\)
        \STATE For each \(b\)-th transition: \(\delta^e_b = R_b - \bar{R} + \max_a \hat{q}_{_T}(S_b', a, \boldsymbol{w_{_T}}) - \hat{q}(S_b, A_b, \boldsymbol{w})\)
        \STATE \(\ell = \frac{1}{B}\sum_{b=1}^{B}L^{\lambda}(\delta^e_b)\) (See Equation \eqref{eq_smoothl1})
        \STATE \(\boldsymbol{w} = \boldsymbol{w} - \alpha\frac{\partial \ell}{\partial \boldsymbol{w}}\)
        \STATE Use the smallest-magnitude explicit TD error for the average-reward update: 
        \STATE \quad \(y = \arg\min_{b \in \{1,\dots,B\}} |\delta^e_b|\) (breaking ties arbitrarily)
        \STATE \quad \(\bar{R} = \bar{R} + \eta\alpha\delta^e_y\)
        \STATE Update \(\boldsymbol{w_{_T}}\) as needed (e.g. using Polyak averaging)
        \ENDIF
        \STATE \(S = S'\)
    \ENDWHILE
    \STATE return \(\boldsymbol{w}\)
\end{algorithmic}
\end{algorithm}

\begin{algorithm}
   \caption{Value-Based Reward Centering DQN \citep{Naik2024-km}}
   \label{alg_6}
\begin{algorithmic}
    \STATE {\bfseries Input:} a differentiable state-action value function parameterization: \(\hat{q}(s, a, \boldsymbol{w})\) (with target network \(\hat{q}_{_T}(s, a, \boldsymbol{w_{_T}})\)), the policy \(\pi\) to be used (e.g., \(\varepsilon\)-greedy)
    \STATE {\bfseries Algorithm parameters:} discount factor \(\gamma\), step size parameters \(\{\alpha\), \(\eta\}\), loss parameter \(\lambda\)
    \STATE Initialize state-action value weights \(\boldsymbol{w}, \boldsymbol{w_{_T}} \in \mathbb{R}^{d}\) arbitrarily (e.g. to \(\boldsymbol{0}\))    
    \STATE Initialize \(\bar{R}\) arbitrarily (e.g. to zero)
    \STATE Obtain initial \(S\)
    \WHILE{still time to train}
        \STATE \(A \leftarrow\) action given by \(\pi\) for \(S\)
        \STATE Take action \(A\), observe \(R, S'\)
        \STATE Store \((S, A, R, S')\) in replay buffer
        \IF {time to update estimates}
        \STATE Sample a minibatch of \(B\) transitions from replay buffer: \(\{(S_b, A_b, R_b, S_b')\}_{b=1}^{B}\)
        \STATE For each \(b\)-th transition: \(\delta^e_b = R_b - \bar{R} + \gamma \max_a \hat{q}_{_T}(S_b', a, \boldsymbol{w_{_T}}) - \hat{q}(S_b, A_b, \boldsymbol{w})\)
        \STATE \(\ell = \frac{1}{B}\sum_{b=1}^{B}L^{\lambda}(\delta^e_b)\) (See Equation \eqref{eq_smoothl1})
        \STATE \(\boldsymbol{w} = \boldsymbol{w} - \alpha\frac{\partial \ell}{\partial \boldsymbol{w}}\)
        \STATE \(\bar{R} = \bar{R} + \eta\alpha\frac{1}{B}\sum_{b=1}^{B}\delta^e_b\)
        \STATE Update \(\boldsymbol{w_{_T}}\) as needed (e.g. using Polyak averaging)
        \ENDIF
        \STATE \(S = S'\)
    \ENDWHILE
    \STATE return \(\boldsymbol{w}\)
\end{algorithmic}
\end{algorithm}


\end{document}